\theoremstyle{plain}
\newtheorem{theorem}{Theorem}[section]
\newtheorem{lemma}[theorem]{Lemma}
\theoremstyle{definition}
\newtheorem{definition}[theorem]{Definition}
\theoremstyle{remark}
\newtheorem{remark}[theorem]{Remark}
\newcommand{\tr}{\transparent{0.6}}
\newtcolorbox{mybox}[3][]
{
  colframe = #2!25,
  colback  = #2!10,
  coltitle = #2!20!black,  
  title    = {#3},
  #1,
}
\definecolor{codegreen}{rgb}{0,0.6,0}
\definecolor{codegray}{rgb}{0.5,0.5,0.5}
\definecolor{codepurple}{rgb}{0.58,0,0.82}
\definecolor{backcolour}{rgb}{0.95,0.95,0.92}
\lstdefinestyle{mystyle}{
    backgroundcolor=\color{backcolour},   
    commentstyle=\color{codegreen},
    keywordstyle=\color{magenta},
    numberstyle=\tiny\color{codegray},
    stringstyle=\color{codepurple},
    basicstyle=\ttfamily\footnotesize,
    breakatwhitespace=false,         
    breaklines=true,                 
    captionpos=b,                    
    keepspaces=true,                 
    numbers=left,                    
    numbersep=5pt,                  
    showspaces=false,                
    showstringspaces=false,
    showtabs=false,                  
    tabsize=2
}
\definecolor{blu}{RGB}{0, 0, 230} 
\icmltitlerunning{One-shot Federated Learning with Foundation Models}
\begin{document}

\twocolumn[
\icmltitle{Parametric Feature Transfer: One-shot Federated Learning with Foundation Models}



\icmlsetsymbol{equal}{*}

\begin{icmlauthorlist}
\icmlauthor{Mahdi Beitollahi}{yyy}
\icmlauthor{Alex Bie}{yyy}
\icmlauthor{Sobhan Hemati}{yyy}
\icmlauthor{Leo Maxime Brunswic}{yyy}
\icmlauthor{Xu Li}{x}
\icmlauthor{Xi Chen}{yyy}
\icmlauthor{Guojun Zhang}{yyy}
\end{icmlauthorlist}


\icmlcorrespondingauthor{Mahdi Beitollahi}{mahdi.beitollahi@huawei.com}
\icmlaffiliation{yyy}{Huawei Noah's Ark Lab, Montreal, Canada}
\icmlaffiliation{x}{Huawei Technologies Canada Inc., Ottawa, Canada}

\icmlkeywords{Machine Learning, ICML}

\vskip 0.3in
]



\printAffiliationsAndNotice{}  

\begin{abstract}
In \emph{one-shot federated learning} (FL), clients collaboratively train a global model in a \emph{single} round of communication. Existing approaches for one-shot FL enhance communication efficiency at the expense of diminished accuracy.

This paper introduces \textbf{FedPFT} (\textbf{Fed}erated Learning with \textbf{P}arametric \textbf{F}eature \textbf{T}ransfer), a methodology that harnesses the transferability of foundation models to enhance both accuracy and communication efficiency in one-shot FL. The approach involves transferring per-client parametric models (specifically, Gaussian mixtures) of features extracted from foundation models. Subsequently, each parametric model is employed to generate synthetic features for training a classifier head. Experimental results on eight datasets demonstrate that FedPFT enhances the communication-accuracy frontier in both centralized and decentralized FL scenarios, as well as across diverse data-heterogeneity settings such as covariate shift and task shift, with improvements of up to 20.6\%.


Additionally, FedPFT adheres to the data minimization principle of FL, as clients do not send real features. We demonstrate that sending real features is vulnerable to potent reconstruction attacks. Moreover, we show that FedPFT is amenable to formal privacy guarantees via differential privacy, demonstrating favourable privacy-accuracy tradeoffs.

\end{abstract}

\section{Introduction}

Federated learning (FL) is a learning paradigm that facilitates training a global model across clients without sharing their raw data \cite{mcmahan2017communication}. Traditional FL takes place iteratively over multiple rounds, wherein each client trains a local model with its own data and subsequently transmits it to a central server for aggregation. The frequent exchange of models, each containing hundreds of millions of parameters, with a central server imposes an intolerable burden of high communication costs on each client, rendering the scalability of FL systems to millions of clients impractical \cite{konevcny2016federated, beitollahi2023federated}. Therefore, reducing the communication cost is desired to support the scaling goals of FL.
 
\begin{figure}[!t]
\centering 
\includegraphics[width=1\columnwidth]{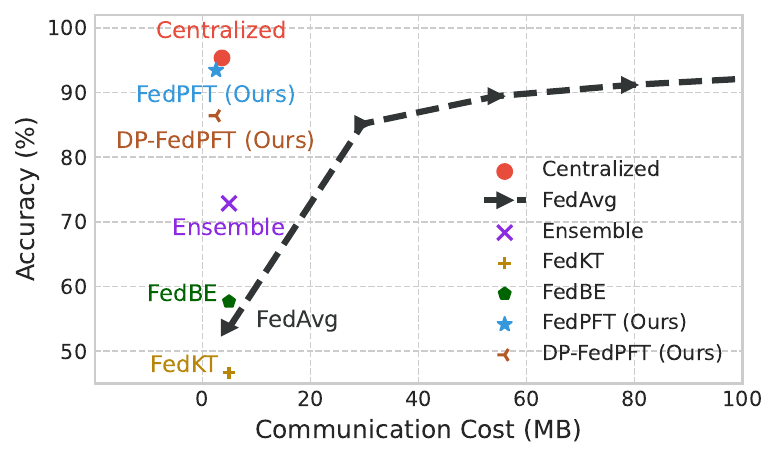}
\vspace{-20pt}
\caption{Comparison of different one-shot FL methods for image classification on Caltech-101 with 50 clients. See Section \ref{sec:comp_sota} for experimental details. FedPFT and DP-FedPFT outperform other one-shot FL methods and are competitive with transmitting real features (Centralized). With more communication budget, multi-round FL (i.e. FedAvg) performs better than one-shot methods.}
\label{fig:intro}
\vspace{-1em}
\end{figure}

\emph{One-shot FL} \cite{guha2019one} is an approach to address the high communication cost of FL by requiring a single communication round. Moreover, one-shot FL addresses several drawbacks of multi-round FL: a) coordinating a multi-iteration FL process across a large number of clients is susceptible to failures, stemming from issues such as client dropout, resource heterogeneity, and real-world implementation challenges; b) support of scenarios where multi-round FL is impractical, e.g., dynamic environments \cite{zhou2022open} where the global model is required to adapt to evolving environments; c) frequent communication poses a higher chance of being intercepted by outsider attacks such as man-in-the-middle attacks \cite{bouacida2021vulnerabilities}.

\begin{figure*}[t!]
\centering
\includegraphics[width=1\textwidth]{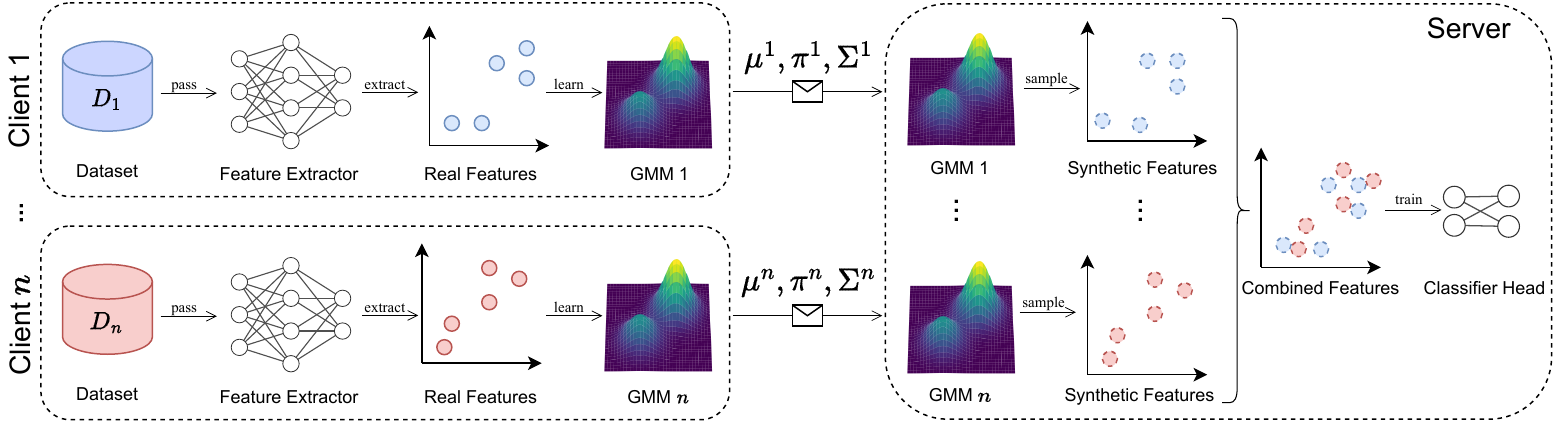} 
\vspace{-2em}
\caption{Illustration of FedPFT in centralized FL. Each client fits GMMs to the distributions of extracted features for each class. Then, GMM's parameters are transmitted to the server, which then samples from these distributions to train a classifier head as the global model.}
\vspace{-0.5em}
\label{fig:gmm_cent}
\end{figure*}

Despite the promising benefits of one-shot FL, existing methods in this category fall short in accuracy compared to their multi-round FL counterparts, primarily due to the strict requirement of one-round knowledge transfer between clients (see Figure \ref{fig:intro}).

Recent strides in foundation models \cite{Bommasani2021FoundationModels} present novel opportunities for efficient knowledge sharing among clients in FL \cite{zhuang2023foundation}. Foundation models such as CLIP \cite{radford2021learning} and GPT series \cite{radford2019language, brown2020language} demonstrate remarkable performance via task-agnostic representations. Features from these models can be leveraged ``as they are'' without extensive fine-tuning, via probing or clustering, to achieve performance on downstream tasks surpassing that of complex, task-specific models. Further, in well-trained foundation models, there exist linear paths in representation space that vary according to semantic axes \cite{mikolov13,harkonen2020}. A corollary is that fitting simple, smooth, parametric distributions in representation space can lead to realistic samples sharing semantic characteristics; on the other hand, this is certainly not true in input space (i.e., Gaussians cannot model the distribution of natural images well enough to generate realistic samples).


\paragraph{Our contributions.} In this paper, we leverage off-the-shelf foundation models to improve both the accuracy and communication efficiency of one-shot FL. We propose \textbf{FedPFT} (\textbf{Fed}erated Learning with \textbf{P}arametric \textbf{F}eature \textbf{T}ransfer). In this data-free knowledge transfer framework, each client learns a parametric model (i.e. Gaussian mixture models (GMMs)) of class-conditional features extracted for each class from a foundation model. Subsequently, clients transmit these parametric models to the server, where they are utilized to generate synthetic features to train a global classifier head as illustrated in Figure \ref{fig:gmm_cent}. FedPFT achieves performance close to centralized training while having client data stay locally on each client, even in scenarios with highly non-iid data distributions, where clients may lack overlapping labels, tasks, or domains. Notably, FedPFT accommodates both heterogeneous communication resources and decentralized network topologies, distinguishing it from prior one-shot federated learning methods. Finally, we demonstrate that FedPFT is amenable to differential privacy (DP), offering formal privacy guarantees with a manageable tradeoff in accuracy. 

Our main contributions include:

\begin{itemize}
    \item We introduce FedPFT, a one-shot FL framework that leverages foundation models to enable parametric feature sharing and significantly enhances the communication-accuracy frontier.

    \item We conduct an extensive evaluation of FedPFT, showcasing its cross-client knowledge transfer capabilities across eight datasets, encompassing various distribution shifts and network topologies. In all tested settings, FedPFT achieves accuracy within a range of 0.03\% to 4\% of centralized accuracy.

    \item We extend FedPFT to offer DP guarantees, demonstrating favorable privacy-accuracy tradeoffs. Additionally, we conduct reconstruction attacks on various feature-sharing schemes and demonstrate the privacy risks of sending real features.

    \item We prove server-side guarantees on the local accuracy of clients under FedPFT. We also analyze the communication-accuracy tradeoffs in GMMs, and show that allocating parameters to increase the number of mixtures yields superior results compared to more granular covariance estimation.

\end{itemize}

\section{Related work}

\textbf{One-shot FL.} Naive parameter averaging methods such as FedAvg \cite{mcmahan2017communication} do not perform well in one-shot FL settings \cite{guha2019one} (see Figure \ref{fig:intro}). To overcome this limitation, researchers have explored diverse strategies, including knowledge distillation (KD) \cite{hinton2015distilling}, ensemble learning \citet{guha2019one}, and generative models \citet{kasturi2023osgan}.

KD methods commonly rely on a \textit{public dataset} with the same distribution of clients' private data at the server to distill the knowledge of the clients to the global model \cite{li2020practical}. However, the availability of public data is not always feasible in practice. In response, \citet{zhang2022dense} and  \citet{zhou2020distilled} have employed generative models and dataset distillation, respectively, to generate the necessary data for KD. However, these methods are also limited to models with batch normalization and non-complex datasets. 

Ensemble learning methods utilize the ensemble of the client's models as the global model at the server. For instance, \citet{guha2019one} averaged the predictions of clients' models at the server, \citet{li2020practical} used major voting, and \citet{chen2020fedbe} sampled global models and combined them via Bayesian model Ensemble. However, as we demonstrate, ensemble methods exhibit poor performance in cases of extreme data heterogeneity, due to local overfitting.

Generative models are used to create a synthetic dataset at the client side and this synthetic dataset is sent to the server for training the global model. For instance, \citet{kasturi2023osgan} trained Generative Adversarial Networks (GAN) at each client. However, training GAN models can be extremely challenging when clients access only a few samples. In another work, \citet{kasturi2020fusion} learned the distribution of categorical features of clients and sent them to the server for sampling.

Recently, \citet{hasan2023calibrated} proposed a one-shot approach using Bayesian learning, where they interpolate between a mixture and the product of the predictive posteriors by considering merging Gaussians in the predictive space.

\textbf{Learning feature statistics.} The exploration of learning feature statistics extends across various domains, including both FL and non-FL domains, such as zero-shot learning \cite{xian2018feature}, few-shot learning \cite{yang2021free}, and continual learning \cite{janson2022simple}. Specifically, \citet{yang2021free} assumed that each dimension of the feature representation follows a Gaussian distribution and transferred the mean and variance of similar classes for few-shot learning. \citet{dennis2021heterogeneity} transferred the mean for unsupervised federated clustering. 



To the best of our knowledge, our work represents the first exploration of learning the distribution of features extracted from foundation models using parametric models, specifically in the context of one-shot federated learning (FL). Additionally, our study uniquely considers a decentralized setting for one-shot FL.


\section{Preliminaries}
\textbf{Federated learning.} The objective of one-shot FL is to learn a model $w$ from data distributed across $I$ clients, each communicating only once. We represent $D_i$ as the local dataset for client $i \in \{1, ..., I \}$ of example-label pairs $(\mathbf{x},y)$, and we denote $n_i :=|D_i|$ as the number of samples for local data of client $i$. The model $w$ is decomposed into $w:= h \circ f$, where $f$ represents a \textit{feature extractor} mapping input $\mathbf{x}$ to a $d$-dimensional embedding, and $h: \mathbb{R}^{d} \rightarrow \mathbb{R}^{C}$ is the classifier head (i.e., linear layer), with $C$ denoting the number of classes. In FL setups, the goal is to minimize the objective function:
\begin{equation} \label{eq:loss}
L(w):=\sum_{i=1}^{I}n_i\mathbb{E}_{(\mathbf{x},y)\sim D_i}[\ell(w; \mathbf{x}, y)],
\end{equation}
where $\ell$ is the cross-entropy loss function.

\textbf{Gaussian mixture models.} We employ Gaussian mixture models (GMMs) as our chosen parametric model, given their concise parameterization and status as universal approximators of densities \cite{scott2015multivariate}. Our approach relies on learning Gaussian mixtures over feature space $\mathbb R^d$. Let $\mathbb{S}^{+}$ denote the set of all $d \times d$ positive definite matrices. We denote by $\mathcal G(K)$ the family of all Gaussian mixture distributions comprised of $K$ components over $\mathbb R^d$. Each density function $g \in \mathcal G(K)$ is identified by a set of tuples $\{(\pi_k,\boldsymbol{\boldsymbol{\mu}}_k,\boldsymbol{\Sigma}_k)\}_{k=1}^K$, where each mixing weight $\pi_k \geq 0$ with $\sum_{k=1}^K \pi_k = 1$, each mean vector $\boldsymbol{\mu}_k \in \mathbb R^d$, and each covariance matrix $\boldsymbol{\Sigma}_k \in \mathbb S^{+}$, satisfying:
\begin{equation} \label{eq:mix}
    g \vcentcolon=  \sum_{k=1}^{K} \pi_k \cdot \mathcal N(\boldsymbol{\mu}_k, \boldsymbol{\Sigma}_k)
\end{equation}
where $\mathcal N(\boldsymbol{\mu}, \boldsymbol{\Sigma})$ refers to the Gaussian density over $\mathbb R^d$ with mean $\boldsymbol{\mu}$ and covariance $\boldsymbol{\Sigma}$. In addition, we denote $\mathcal G_{\text{diag}} (K)$ to denote Gaussian mixtures comprising of diagonal Gaussians, i.e., with the additional constraint that all $\boldsymbol{\Sigma}_k$ are diagonal.
We also denote $\mathcal G_{\text{spher}}$ to Gaussians mixtures with spherical covariances, i.e., each $\boldsymbol{\Sigma_k} \in \{ \lambda \mathbb{I}_d: \lambda \in \mathbb{R}_{\ge 0}\}$. We may also refer to the family of full covariance $\mathcal G(K)$ as $\mathcal G_{\text{full}}(K)$, and use $\mathcal G_{\text{cov}}(K)$ to denote different family types.

\section{Methods}
In this section, We first describe FedPFT for the centralized FL setting, assuming the presence of a centralized server connected to all clients. We also describe the modifications required to adapt FedPFT for (a) the decentralized FL setting, where no single server is connected to all clients; and (b) differential privacy requirements.

\subsection{Centralized FedPFT}
In this conventional FL setup, a central server can aggregate the knowledge from all clients. In the centralized FedPFT scenario, as illustrated in Figure \ref{fig:gmm_cent}, each client $i$ extracts class-conditional features from its local dataset for each available class $c \in \{1, ..., C\}$:
\begin{equation}
F^{i,c} \vcentcolon = \{f(\mathbf{x}); (\mathbf{x},y) \in D_i, y=c\},
\end{equation}
using the pre-trained foundation model feature extractor $f$. Next, each client $i$ learns runs the Expectation Maximization (EM) algorithm \cite{dempster1977maximum} on $F^{i,c}$ to learn a GMM $g^{i,c} \in \mathcal{G}_\text{cov}(K)$ for each class $c$ that approximates $F^{i,c}$. Finally, each client sends its $g^{i,c}$ parameters $\{(\pi_k^{i,c},\boldsymbol{\mu}_k^{i,c},\boldsymbol{\Sigma}_k^{i,c})\}_{k=1}^K$ to the server. On the server side, the server samples class-conditional synthetic features $\Tilde{F}^{i,c}$ from each received $g^{i,c}$ parameters, i.e,
\begin{equation}
    \Tilde{F}^{i,c} \sim g^{i,c} = \sum_{k=1}^{K} \pi_k^{i,c} \cdot \mathcal N(\boldsymbol{\mu}_k^{i,c}, \boldsymbol{\Sigma}_k^{i,c}) 
\end{equation}
with the size of $|F^{i,c}|$. Then, the server combines class-conditional synthetic features $\Tilde{F}^{i,c}$ from all the clients and classes to create synthetic feature dataset $\Tilde F$ as follows,
\begin{equation}
\Tilde D = \bigcup_{i=1}^I \bigcup_{c =1}^C \{(\mathbf{v},c): \mathbf{v} \in \Tilde F^{i,c}\}.
\end{equation}
Finally, the server trains a classifier head $h$ on $\Tilde F$, minimizing $\mathbb E_{(\mathbf{v},y)\sim \Tilde D}[\ell(h;\mathbf{v},y)]$ where $\ell$ is the cross-entropy loss. The trained, global classifier head $h$ is then sent back to the clients, and clients can use $w = h \circ f$ as the global model. This process is described in Algorithm \ref{alg:algorithm}. 


\begin{figure}[t!]
\centering
\includegraphics[width=1\columnwidth]{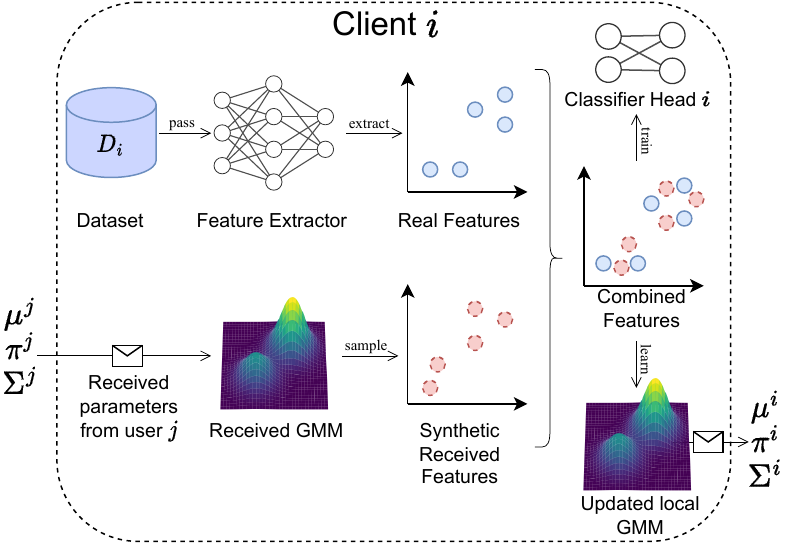} 
\vspace{-2em}
\caption{Illustration of the FedPFT framework for decentralized FL. Each client updates the received statistics of GMMs with its local data and transfers it to other clients.
}
\vspace{-0.5em}
\label{fig:gmm_dist}
\end{figure}

The described FedPFT method operates in a one-shot manner, meaning that each client needs to communicate with the server only once. Further, FedPFT's aggregation method in the server is data-free meaning it does not require any additional data to aggregate clients' features. Finally, since the server has access to class-conditional features for each class, it can create classifiers using only a subset of classes.

\begin{algorithm}[tb]
\caption{FedPFT for centralized, one-shot FL.}
\label{alg:algorithm}
\begin{algorithmic}[1] 
\STATE \textbf{Input}: Client datasets $D_1,...,D_I$, pre-trained feature extractor $f$. \\
\STATE \textbf{Parameters}: Number of clients $I$, number of classes $C$, number of mixtures $K$,  covariance type \text{cov}.\\
\STATE  \textbf{Output}: Model $w := h \circ f$
\STATE \texttt{// Client side:} 
\FOR{each client $i \in \{1, ..., I \}$}
\FOR{each class $c \in \{1, ..., C\}$}
\STATE Let $F^{i,c} \vcentcolon= \{f(\mathbf{x}): (\mathbf{x},y) \in D_i, y=c\}$.
\STATE Run the EM algorithm on $F^{i,c}$ to learn a GMM  $g^{i,c} \in \mathcal{G}_\text{cov}(K)$.
\STATE Send $g^{i,c}$ parameters $\{(\pi_k^{i,c},\boldsymbol{\mu}_k^{i,c},\boldsymbol{\Sigma}_k^{i,c})\}_{k=1}^K$ to the server.
\ENDFOR
\ENDFOR
\STATE  \texttt{// Server side:}
\FOR{each received $\{(\pi_k^{i,c},\boldsymbol{\mu}_k^{i,c},\boldsymbol{\Sigma}_k^{i,c})\}_{k=1}^K$ set}
\STATE Sample synthetic features $\Tilde{F}^{i,c} \sim g^{i,c} = \sum_{k=1}^{K} \pi_k^{i,c} \cdot \mathcal N(\boldsymbol{\mu}_k^{i,c}, \boldsymbol{\Sigma}_k^{i,c})$ of size $|F^{i,c}|$.
\ENDFOR
\STATE Let $\Tilde D = \bigcup_{i=1}^I \bigcup_{c =1}^C \{(\mathbf{v},c): \mathbf{v} \in \Tilde F^{i,c}\}$.
\STATE Train a classifier head $h$ on $\Tilde F$, minimizing $\mathbb E_{(\mathbf{v},y)\sim \Tilde D}[\ell(h;\mathbf{v},y)]$ where $\ell$ is the cross-entropy loss.
\STATE \textbf{return} model $w = h \circ f$.
\end{algorithmic}
\end{algorithm}

\subsection{Decentralized FedPFT} \label{sec:decen}

\begin{figure*}[ht]
    \centering
    \begin{minipage}{0.49\textwidth}
        \centering
        \includegraphics[width=0.99\textwidth]{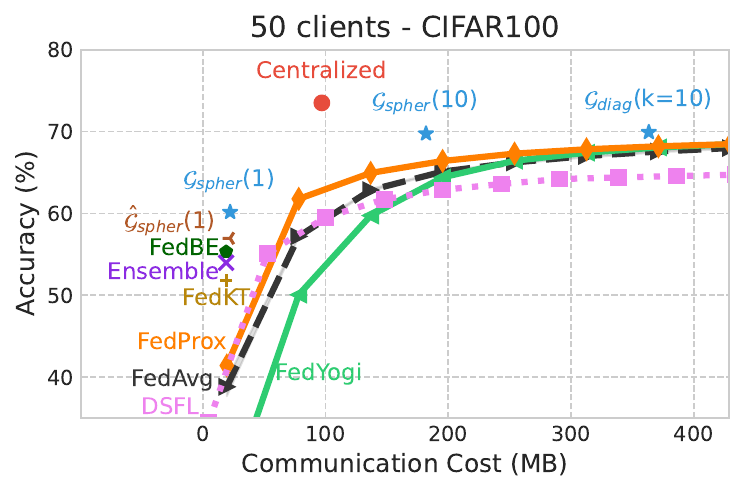} 
    \end{minipage}\hfill
    \begin{minipage}{0.49\textwidth}
        \centering
        \includegraphics[width=0.99\textwidth]{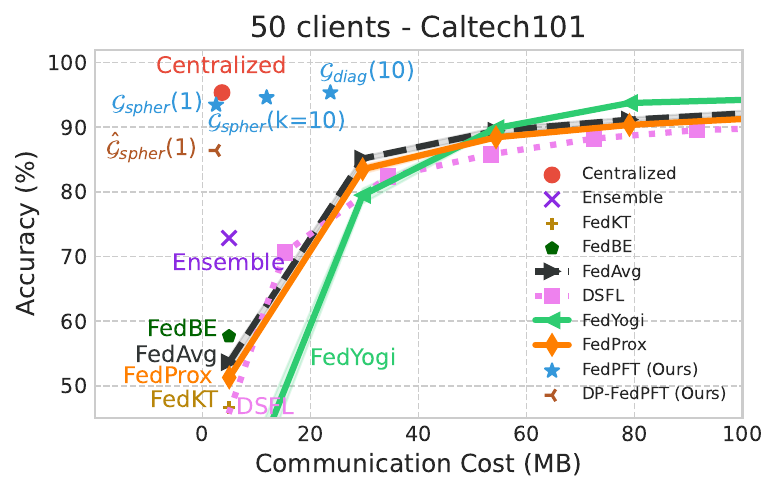} 
    \end{minipage}
    \vspace{-1em}
    \caption{FedPFT vs existing one-shot and multi-round FL methods in Centralized setting with CIFAR100 (left) and Caltech 101 (right) dataset. FedPFT ($\mathcal{G}$) and DP-FedPFT ($\hat{\mathcal{G}}$) surpass other one-shot FL methods, and are competitive with sending raw features (Centralized).}
    \vspace{-1em}
    \label{fig:com_acc}
\end{figure*}

In decentralized FL, there is no centralized server, and clients are all connected in an ad-hoc manner. Therefore, each client has the responsibility of aggregating the knowledge of its dataset with other clients and passing the knowledge to the next client.

FedPFT for decentralized FL is illustrated in Figure \ref{fig:gmm_dist}. In this method, similar to centralized FL, each client $i$ creates class conditional features from its local dataset for each label $c$, i.e., $F^{i,c}$. Then, client $i$ samples from the received GMMs from client $j$ to generate synthetic class-conditional features $\Tilde{F}^{j,c}$. Next, client $i$ learns runs the Expectation Maximization (EM) algorithm on $F^{i,c} \cup \Tilde{F}^{j,c}$ to learn a GMM $g^{i,c} \in \mathcal{G}_\text{cov}(K)$ for each class $c$ that approximates the union of $F^{i,c}$ and $\Tilde{F}^{j,c}$. The parameters of $g^{i,c}$ are sent to the next client. At the same time, each client can use the combined features $F^{i,c} \cup \Tilde{F}^{j,c}$ to train its local classifier head $h_i$. By passing GMMs between clients, knowledge of each client is accumulated and propagated between clients with just one round of communication where the last client has the knowledge of all the clients.

\subsection{DP-FedPFT}
FedPFT's goal is to transfer the parameters of GMMs without leaking clients' private information in their dataset. For formal privacy guarantees, we employ differential privacy \citep[DP,][]{dwork2006}. 
To make FedPFT differentially private, we use the Gaussian mechanism \cite{dwork2006,dwork2014algorithmic} to privatize the release of all mean vectors and covariance matrices. The following theorem provides the $(\epsilon, \delta)$-differential privacy guarantee for FedPFT in the case of Gaussians ($\mathcal{G}_{\text{full}}(K=1)$). 

\begin{theorem} \label{th:dp-m=1}
Suppose the feature embedding $f$ satisfies $\|f\|_2 \leq 1$. Let $\hat{\boldsymbol\mu}(\cdot)$ and $\hat{\mathbf{\Sigma}}(\cdot)$ be the estimator of mean and covariance, respectively.
Define the Gaussian mechanism 
\begin{align}
        &\mathcal M : D \mapsto (\widetilde{\boldsymbol{\mu}}(D),
\widetilde {\boldsymbol{\Sigma}}(D)), \\
        &\widetilde{\boldsymbol\mu}(D) = 
        \hat{\boldsymbol\mu}(f(D)) + \Delta \boldsymbol\mu,\\
    &\widetilde{\boldsymbol\Sigma}(D) = {\rm Proj}_{\mathbb{S}_+}( \hat{\mathbf{\Sigma}}(f(D))+ \Delta\boldsymbol 
    \Sigma), 
\end{align}
where the elements of vector $\Delta\boldsymbol{\mu}$ and matrix $\Delta \boldsymbol \Sigma$  are sampled from independent $\mathcal{N}\left(0,\left(\frac{4}{n_i\epsilon}\sqrt{5 \ln(4/\delta)} \right) ^2\right)$, and ${\rm Proj}_{\mathbb{S}_+}$ is the projection onto the set of positive semi-definite matrices. 
Then, the Gaussian mechanism $\mathcal M$ satisfies $(\epsilon, \delta)$-differential privacy.
\end{theorem}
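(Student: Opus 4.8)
The plan is to view $\mathcal M$ as a single Gaussian mechanism applied to the concatenated, vector‑valued query $Q(D) := \big(\hat{\boldsymbol\mu}(f(D)),\, \hat{\boldsymbol\Sigma}(f(D))\big)$, followed by the \emph{data‑independent} map ${\rm Proj}_{\mathbb{S}_+}$ acting only on the covariance block. Since differential privacy is immune to post‑processing, it then suffices to (i) bound the $\ell_2$‑sensitivity of $Q$ under the unit‑norm assumption $\|f\|_2 \le 1$, and (ii) check that the prescribed per‑coordinate noise variance meets the Gaussian‑mechanism threshold for that sensitivity. Throughout I would fix two neighboring local datasets $D, D'$ of size $n_i$ differing in one example, and take $\hat{\boldsymbol\mu}, \hat{\boldsymbol\Sigma}$ to be the empirical (MLE) mean and covariance of the embedded points.

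First I would bound the mean block: since every $f(\mathbf x_j)$ lies in the unit ball, swapping one example moves $\hat{\boldsymbol\mu} = \frac{1}{n_i}\sum_j f(\mathbf x_j)$ by at most $\frac{2}{n_i}$ in $\ell_2$ norm, so $\Delta_{\boldsymbol\mu} \le \frac{2}{n_i}$. For the covariance block I would use the decomposition $\hat{\boldsymbol\Sigma} = \frac{1}{n_i}\sum_j f(\mathbf x_j)f(\mathbf x_j)^\top - \hat{\boldsymbol\mu}\hat{\boldsymbol\mu}^\top$: the raw second‑moment term changes by at most $\frac{2}{n_i}$ in Frobenius norm (each rank‑one summand has Frobenius norm $\|f(\mathbf x_j)\|_2^2 \le 1$), while $\|\hat{\boldsymbol\mu}\hat{\boldsymbol\mu}^\top - \hat{\boldsymbol\mu}'\hat{\boldsymbol\mu}'^\top\|_F \le \|\hat{\boldsymbol\mu}\|_2\|\hat{\boldsymbol\mu}-\hat{\boldsymbol\mu}'\|_2 + \|\hat{\boldsymbol\mu}-\hat{\boldsymbol\mu}'\|_2\|\hat{\boldsymbol\mu}'\|_2 \le \frac{4}{n_i}$ using $\|\hat{\boldsymbol\mu}\|_2,\|\hat{\boldsymbol\mu}'\|_2 \le 1$; the triangle inequality then gives $\Delta_{\boldsymbol\Sigma} \le \frac{6}{n_i}$ in Frobenius norm, which is the relevant norm since the perturbation $\Delta\boldsymbol\Sigma$ has i.i.d.\ entries, i.e.\ it is spherical in the vectorized matrix. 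This covariance step is the main obstacle: the estimator is nonlinear in the data, and a single swap perturbs both the second‑moment term and the subtracted outer product of the empirical mean, so the constant has to be tracked carefully through the decomposition (rather than, say, differentiating the centered form directly).

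Finally I would combine the blocks. Treating $\Delta\boldsymbol\mu$ together with the entries of $\Delta\boldsymbol\Sigma$ as a single noise vector, the $\ell_2$‑sensitivity of $Q$ is $\Delta = \sqrt{\Delta_{\boldsymbol\mu}^2 + \Delta_{\boldsymbol\Sigma}^2} \le \frac{1}{n_i}\sqrt{2^2 + 6^2} = \frac{2\sqrt{10}}{n_i}$. The Gaussian mechanism yields $(\epsilon,\delta)$‑DP whenever each coordinate is perturbed by independent $\mathcal N(0,\sigma^2)$ noise with $\sigma \ge \frac{\Delta}{\epsilon}\sqrt{2\ln(1.25/\delta)} = \frac{4\sqrt 5}{n_i\epsilon}\sqrt{\ln(1.25/\delta)}$, and the prescribed standard deviation $\frac{4}{n_i\epsilon}\sqrt{5\ln(4/\delta)} = \frac{4\sqrt 5}{n_i\epsilon}\sqrt{\ln(4/\delta)}$ is at least this quantity since $\ln(4/\delta)\ge\ln(1.25/\delta)$ (the looser constant $4$ in place of $1.25$ also leaves room for a version of the Gaussian mechanism valid for all $\epsilon>0$). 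Applying post‑processing immunity to ${\rm Proj}_{\mathbb{S}_+}$, which acts only on the already‑noised covariance (after any fixed symmetrization and independently of $D$), then completes the argument. Two minor points to state cleanly along the way: the neighboring relation (the bounds above are for the replace‑one convention, matching the $n_i$ in the denominator; the add/remove convention only changes lower‑order terms), and the fact that one does not actually need ${\rm Proj}_{\mathbb{S}_+}$ to be non‑expansive — only that it is a fixed map, so post‑processing applies verbatim.
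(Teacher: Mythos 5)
Your proposal is correct, and its skeleton coincides with the paper's: treat $(\hat{\boldsymbol\mu},\hat{\boldsymbol\Sigma})$ as one vector-valued query, bound its $\ell_2$-sensitivity by $\sqrt{(2/n_i)^2+(6/n_i)^2}=2\sqrt{10}/n_i$, invoke the Gaussian mechanism (the prescribed noise $\tfrac{4}{n_i\epsilon}\sqrt{5\ln(4/\delta)}$ indeed dominates the required level), and dispose of ${\rm Proj}_{\mathbb S_+}$ by post-processing. The one place you genuinely diverge is the derivation of the covariance sensitivity $6/n_i$. The paper gets it by representing the neighboring dataset's empirical distribution as a mixture $X^\epsilon=\epsilon X+(1-\epsilon)Y$ (add/remove-one convention, $p=1-1/n$), expanding $\mathrm{Cov}(X^\epsilon,X^\epsilon)-\mathrm{Cov}(X,X)$, and bounding the resulting cross terms via $\rho(\hat{\boldsymbol\Sigma})\le\|\hat{\boldsymbol\Sigma}\|_F\le 2$; you instead split $\hat{\boldsymbol\Sigma}$ into the raw second-moment matrix minus $\hat{\boldsymbol\mu}\hat{\boldsymbol\mu}^\top$ under the replace-one convention and apply the triangle inequality ($2/n_i$ for the second moment, $4/n_i$ for the outer product). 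Both yield the same constant, and your route is the more elementary and transparent of the two; it also makes immediately visible the improvement the paper only notes in a subsequent remark, namely that releasing $\hat{\boldsymbol\Sigma}+\hat{\boldsymbol\mu}\hat{\boldsymbol\mu}^\top$ (the uncentered second moment) drops the covariance sensitivity to $2/n_i$. Your side remarks — that the projection need only be data-independent, not non-expansive, and that the neighboring convention must be fixed to match the $n_i$ in the denominator — are both correct and worth stating; the paper is in fact slightly loose on the latter point.
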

See Appendix \ref{apx:proofs} for the proof.



\section{Experiments}

We examine the knowledge transfer capabilities of FedPFT and DP-FedPFT in the one-shot setting, comparing them with state-of-the-art methods across various data heterogeneity settings and network topologies. Our experiments support the claim that FedPFT: (1) compares favorably against existing one-shot FL methods; (2) succeeds in a variety of extreme client distribution scenarios challenging for FL; and (3) supports decentralized network topologies.

\begin{table}[h]
  \caption{Summary of datasets and foundation models}
  \label{tbl:datasests}
    \centering
     \resizebox{\columnwidth}{!}{\begin{tabular}{lllll}
    \toprule
    Dataset   & Min  size      & \# Train              & \# Classes & Foundation Model  \\
    \midrule
    CIFAR10   &  (32, 32)       & 50,000                    & 10        & ResNet-50                     \\  
    CIFAR100   &  (32, 32)       & 50,000                   & 100        & ResNet-50                     \\  
    PACS (P) &  (224, 224)       & 1,336                   & 7        & ViT-B/16                   \\  
    PACS (S) &  (224, 224)       & 3144                   & 7        & ViT-B/16                     \\
    Office Home (C) &  (18, 18)       & 3492                   & 65        & ViT-B/16                    \\
    Office Home (P) &  (75, 63)       & 3551                   & 65        & ViT-B/16                    \\
    Caltech101 &  (200, 200)       & 6084                   & 101        & CLIP, ViT-B/32                   \\
    Stanford Cars &  (360, 240)       & 12948                   & 196        & CLIP, ViT-B/32                    \\
    Oxford Pets &  (108, 114)       & 3680                     & 37        & CLIP, ViT-B/32                    \\
    Food101 &  (512, 512)       & 75750                   & 101        & CLIP, ViT-B/32                    \\
    \bottomrule
  \end{tabular}}
  \vspace{-1.5 em}
\end{table}

\subsection{Experimental setting}
 \paragraph{Datasets and foundation models.} We use 8 vision datasets including CIFAR10/100 \cite{krizhevsky2014cifar}, PACS \cite{li2017deeper}, Office Home \cite{venkateswara2017deep}, Caltech101 \cite{li_andreeto_ranzato_perona_2022}, Stanford Cars \cite{krause20133d}, Oxford Pets \cite{parkhi2012cats}, and Food101 \cite{bossard2014food} and three foundation models including ResNet-50 \cite{he2016deep} pre-trained on ImageNet, ViT-B/16 \cite{dosovitskiy2020image} pre-trained on ImageNet, and the CLIP \cite{radford2021learning} image encoder as our feature extractor $f$. Table \ref{tbl:datasests} provides a summary of the datasets and the corresponding feature extractors used.

\begin{table*}[!th]
  \caption{FedPFT in three extreme shifts in two-client decentralized FL. We format \textbf{first} and{\transparent{0.6} oracle} results.}
  \label{tbl:extreme-shift}
    \resizebox{\textwidth}{!}{\begin{tabular}{l cccc cccc cccc}
    \toprule
    & \multicolumn{4}{c}{Disjoint Label shift} & \multicolumn{4}{c}{Covariate shift}  & \multicolumn{4}{c}{Task shift} \\
     \cmidrule(lr){2-5}  \cmidrule(lr){6-9}   \cmidrule(lr){10-13} 
    & \multicolumn{2}{c}{CIFAR-10}  & \multicolumn{2}{c}{CIFAR-100}  & \multicolumn{2}{c}{PACS (P$\rightarrow$S)}  & \multicolumn{2}{c}{Office Home (C$\rightarrow$P)}  & \multicolumn{2}{c}{Birds $\rightarrow$ Cars}  & \multicolumn{2}{c}{Pets  $\rightarrow$ Food} \\
    \cmidrule(lr){2-3}  \cmidrule(lr){4-5} \cmidrule(lr){6-7} \cmidrule(lr){8-9} \cmidrule(lr){10-11}   \cmidrule(lr){12-13} 
    Methods   & Accuracy      & Comm.       & Accuracy      & Comm.    & Accuracy      & Comm.       & Accuracy      & Comm.    & Accuracy      & Comm.       & Accuracy      & Comm.  \\
    \midrule
    \tr Centralized  & \tr 90.85  \tiny{ $\pm$ 0.03}  & \tr 97 MB   & \tr 73.97  \tiny{ $\pm$ 0.06}   &\transparent{0.4} 97 MB    &  \tr 89.15  \tiny{ $\pm$ 0.17}              & \tr 2.5 MB            & \tr 82.00  \tiny{ $\pm$ 0.16}              & \tr 6.3 MB    & \tr 81.88  \tiny{ $\pm$ 0.06}              & \tr 6.7 MB            & \tr 88.48  \tiny{ $\pm$ 0.05}              & \tr 3.6 MB         \\
    Ensemble         & 80.18  \tiny{ $\pm$ 0.30}      & 80 KB       & 57.94  \tiny{ $\pm$ 0.22}      & 0.7 MB       & 79.59  \tiny{ $\pm$ 0.83}              &  10.5 KB            & 71.36  \tiny{ $\pm$ 0.46}              &  96 KB    & 58.33  \tiny{ $\pm$ 2.01}              & 0.3 MB            & 83.54  \tiny{ $\pm$ 0.21}              & 0.2 MB   \\
    Average         & 77.66  \tiny{ $\pm$ 1.04}       & 80 KB       & 56.82  \tiny{ $\pm$ 0.22}      & 0.7 MB    & 77.83  \tiny{ $\pm$ 0.23}              & 10.5 KB            & 69.69  \tiny{ $\pm$ 0.69}              &  96 KB    & 72.65  \tiny{ $\pm$ 0.13}              & 0.3 MB            & 83.25  \tiny{ $\pm$ 0.59}              & 0.2 MB       \\    
    KD              & 74.22  \tiny{ $\pm$ 0.42}       & 80 KB       & 55.62  \tiny{ $\pm$ 0.20}      & 0.7 MB       & 67.69  \tiny{ $\pm$ 2.47}             & 10.5 KB            & 72.15  \tiny{ $\pm$ 0.75}              & 96 KB   & 40.46  \tiny{ $\pm$ 0.31}              & 0.3 MB            & 43.67  \tiny{ $\pm$ 0.04}              & 0.2 MB       \\     
    \midrule
    $\mathcal{G}_{\text{diag}}$(K=10)    &  {86.19}  \tiny{ $\pm$ 0.15}      & 0.4 MB       & {69.97}  \tiny{ $\pm$ 0.07}            & 3.9 MB     & \bf{89.12}  \tiny{ $\pm$ 0.18}             &  0.2 MB          & {80.56}  \tiny{ $\pm$ 0.31}              & 1.8 MB  & {81.74}  \tiny{ $\pm$ 0.06}             & 1.9 MB          & {88.22}  \tiny{ $\pm$ 0.07}              & 0.7 MB     \\ 
    $\mathcal{G}_{\text{diag}}$(K=20)    & \bf{86.89}  \tiny{ $\pm$ 0.02} & 0.8 MB       & \bf{70.31}  \tiny{ $\pm$ 0.10}        & 7.8 MB    & {89.00}  \tiny{ $\pm$ 0.15}             & 0.4 MB          & \bf{80.94}  \tiny{ $\pm$ 0.16}              & 3.6 MB    & \bf{81.75}  \tiny{ $\pm$ 0.04} & 3.8 MB & \bf{88.25}  \tiny{ $\pm$ 0.08} & 1.4 MB \\ 
    \bottomrule
  \end{tabular}}
  \vspace{-1 em}
\end{table*}

 \paragraph{Implementation.} We provide the code for implementing GMMs in FedPFT in Appendix \ref{code:gmm}. For transmitting GMM parameters, we use a 16-bit encoding. We use $\mathcal{G}_{\text{cov}}(K)$ and $\hat{\mathcal{G}}_{\text{cov}}(K)$ to denote FedPFT and DP-FedPFT, respectively, where \textit{cov} represented covariance types. We set $\epsilon=1$, $\delta = 1/|D^{i,c}|$, and use $K=1$ mixture components for all DP-FedPFT experiments. We run all the experiments for three seeds and report the mean and standard deviation of the accuracy on the hold-out test dataset, along with its communication cost. Further experimental details can be found in Appendix \ref{sec:exp-details}.

\paragraph{Baselines.}
We present the results of centralized training with raw features (\textit{Centralized}) as the oracle result. We also include the results of an \textit{Ensemble} comprising locally trained classifier heads from clients, where the class with the highest probability across the models is selected.

\subsection{Comparing to existing one-shot and multi-round FL methods} \label{sec:comp_sota}
In Figure \ref{fig:com_acc}, we compare the performance of FedPFT with state-of-the-art one-shot FL methods in centralized settings with 50 clients for training the classifier head. We conduct tests on both CIFAR100 with the ResNet-50 feature extractor and the Caltech101 dataset with the CLIP ViT feature extractor, where the samples are distributed across clients according to Dirichlet ($\beta=0.1$). The benchmarked multi-round methods include FedAvg \cite{mcmahan2017communication}, FedProx \cite{li2020federated}, DSFL \cite{beitollahi2022dsfl}, and FedYogi \cite{reddi2020adaptive}, along with single-shot methods such as FedBE \cite{chen2020fedbe}, FedKT \cite{li2020practical}, and Ensemble.

Figure \ref{fig:com_acc} shows that FedPFT and DP-FedPFT beat other one-shot FL methods; and are competitive with sending raw features (Centralized). With more communication budget, multi-round FL methods perform better than existing one-shot methods. Also, Figure \ref{fig:com_acc} shows different tradeoffs for different numbers of mixtures $K$ and covariance types.  Additionally, Figure \ref{fig:com_acc} demonstrates different tradeoffs for varying numbers of mixtures $K$ and covariance types. See Section \ref{sec:features_dist} for more discussion on these tradeoffs. Further details of this experiment can be found in the Appendix. 

\begin{figure}[h]
\centering 

\includegraphics[width=\columnwidth]{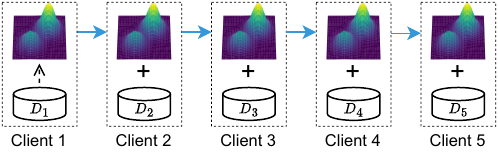} %
\vspace{-2em}
\caption{Five clients in a linear topology. Each client updates its received GMM with its local data and sends it to the next client.}
\label{fig:5clients}

\includegraphics[width=\columnwidth]{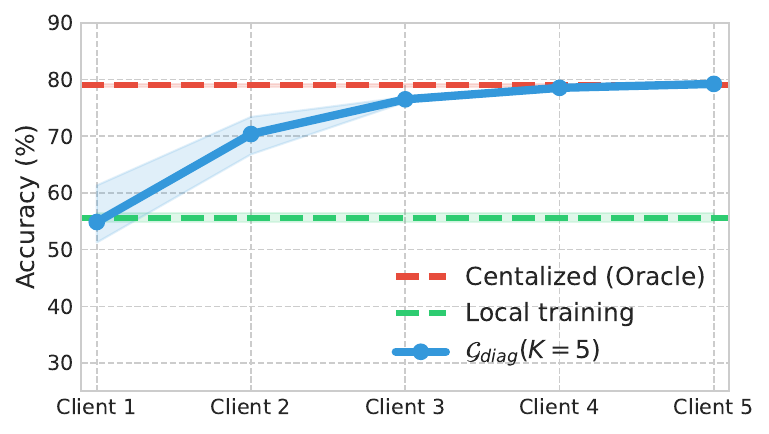} 
\vspace{-2em}
\caption{Results of FedPFT with 5 clients in linear topology as illustrated in Figure \ref{fig:5clients}.}
\vspace{-1em}
\label{fig:5clients_results}
\end{figure}

\begin{figure*}[t!]
\centering
\includegraphics[width=1\textwidth]{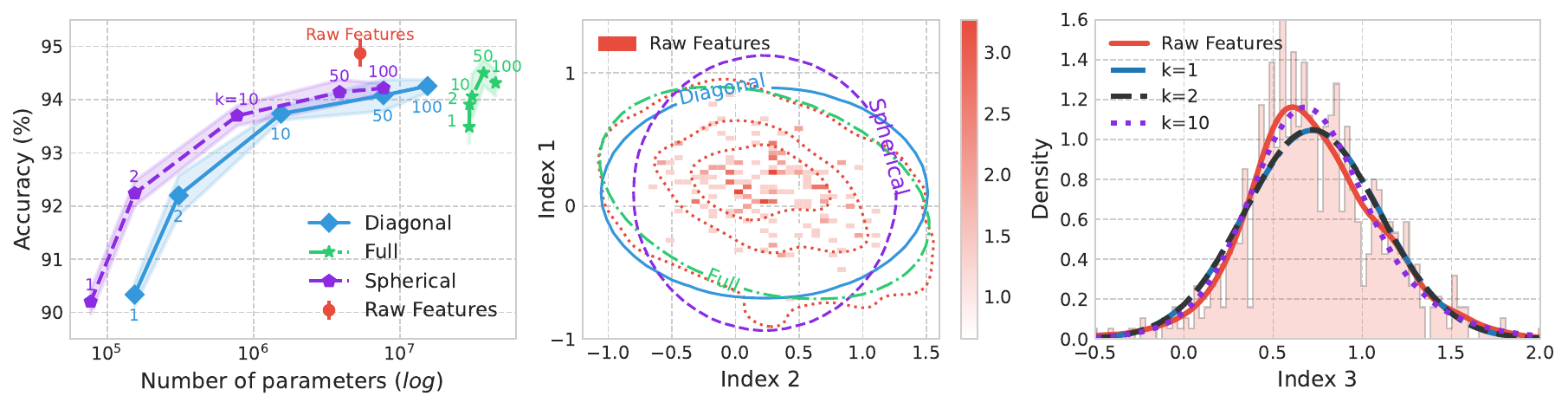} 
\vspace{-2em}
\caption{Comparing real and synthetic feature distributions using the Caltech101 dataset. (Left) Classifier accuracy on raw vs. synthetic features from various GMMs. (Middle) 2-dimensional and (Right) 1-dimensional distribution of random indexes of real features vs GMMs' counterparts with different covariance types and number of mixtures.}
\vspace{-1em}
\label{fig:features}
\end{figure*}

\subsection{Extreme shifts in two-client decentralized FL} \label{sec:extreme-shift}
By design, FedPFT is agnostic to data heterogeneity. To assess this claim, we examine decentralized FL settings with two clients: source and destination. These clients exhibit significantly different training distributions. Specifically, we explore three types of extreme shift scenarios—label shift, covariate shift, and task shift—between clients, where the source can communicate only once to the destination client. We report the performance of the destination's trained classifier head on both clients' test datasets in Table \ref{tbl:extreme-shift}.

For baselines, we compare to KD, where: (1) each client locally trains a classifier head; (2) the source client sends its local classifier head to the destination client; and (3) the destination client distills the received classifier head to its local classifier head. We report ensembling and averaging locally trained classifier heads as baselines.

\paragraph{Disjoint label shift.}
In our disjoint label shift setups, our source client only has samples from the first half of labels (0-4 for CIFAR-10 or 0-49 for CIFAR-100), and the destination client has the other half.

\paragraph{Covariate shift.}
In our covariate shift setups, the two clients have access to the two most distinctive domains of PACS and Office Home datasets according to \cite{hemati2023cross}. Specifically, for PACS, we consider the scenario where the source has access only to Photo (P) images, while the destination has access to Sketch (S) images. For Office Home, the source has access to Clipart (C) images, and the destination has access to Product (P) images.

\paragraph{Task shift.}
In our task shift setups, the two clients have access to two different datasets with distinct tasks. First, we consider the scenario where the source has bird images from Caltech101, while the destination has car images from Stanford Cars. In the second experiment, the source has access to pet images from Oxford Pets, and the destination has food images from Food101.

\paragraph{Results.} Table \ref{tbl:extreme-shift} demonstrates that FedPFT succeeds in various extreme client distribution scenarios and highlights the limitations of vanilla averaging, ensembling, and KD methods.

\subsection{Linear topology}
In this setup, we demonstrate the propagation and accumulation of knowledge using FedPFT in decentralized FL with five clients in a linear topology where each client has access to 100 i.i.d. samples of the CIFAR10 dataset, as illustrated in Figure \ref{fig:5clients}. Using GMMs, we transfer knowledge of client 1 to client 5 in four communication rounds and report the performance of each client's classifier head trained on its received GMM on the entire 5 client dataset in Figure \ref{fig:5clients_results}. We also compare FedPFT with centralized training and local training, where we train a classifier head for each client's dataset. Figure \ref{fig:5clients_results} shows that as GMMs pass through clients, they can accumulate and propagate the knowledge and achieve performance close to 1.8\% of centralized training.


\section{Analysis}
In this section, we provide a comprehensive analysis of FedPFT, examining its accuracy, communication cost, and privacy characteristics. In particular, we: (1) examine the tradeoff between accuracy and communication cost for different families of GMMs; (2) prove theoretical guarantees on local client accuracy; (3) estimate communication costs; and (4) analyze privacy leakage in FedPFT, compared to sending raw features.

\subsection{How well do GMMs model feature distributions?} \label{sec:features_dist}
FedPFT relies on learning the distribution of features of each class using GMMs. We aim to assess how effectively GMMs can estimate class-conditional features. We measure the accuracy gap between two classifier heads: one trained on real features and another on synthetic features generated using GMMs. Monitoring the accuracy gap helps us evaluate the discriminative power of synthetic features. We examine the effect of the number of mixtures $K$ and the type of covariance matrix. For this experiment, pictured in Figure \ref{fig:features}, we use CLIP ViT-B/32 features on the Caltech101 dataset. For various families of GMMs, we plot accuracy and the total number of statistical parameters.


Figure \ref{fig:features} (left) shows that 10-50 Gaussians are sufficient to represent raw extracted features with less than a 1\% drop in accuracy. Notably, GMMs with a spherical covariance matrix exhibit better tradeoffs between communication and accuracy compared to full/diagonal covariance matrices. Additionally, we illustrate the 1-dimensional (Figure \ref{fig:features} right) and 2-dimensional (Figure \ref{fig:features} middle) density of random indexes of raw features and compare them with GMMs.


\subsection{FedPFT guarantees local accuracy of each client}
In this section, we assume each client's local feature dataset is dequantized (say by adding a noise).
\begin{theorem}
    For any classifier head $h$, with $\ell_i^{0-1}$ and $\widetilde{\ell_{i,c}^{0-1}}$ the  0-1 losses of $h$ on $f(D_i)$ and $\widetilde F^{i,c}$ respectively,  $\mathcal H^{i,c}$  the self-entropy of the distribution $F^{i,c}$, and $\mathcal L_{EM}^{i,c}$ the class-wise log-likelihood of the EM algorithm:
    \begin{equation}\label{eq:local_guarantee}
        \ell_i^{0-1} \leq \mathbb E_c\left[ 2\widetilde{\ell_{i,c}^{0-1}} - (\widetilde{\ell_{i,c}^{0-1}}) ^2 + \frac{1-\widetilde{\ell_{i,c}^{0-1}}}{\sqrt{2}}\sqrt{\mathcal H^{i,c}-\mathcal L_{EM}^{i,c}}  \right].\nonumber
    \end{equation}
    
\end{theorem}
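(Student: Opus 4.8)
The plan is to peel the claim into per-class distribution-shift inequalities and then convert a total-variation bound into the entropy/log-likelihood language via Pinsker's inequality. First I would decompose the $0$--$1$ loss of $h$ on all of $f(D_i)$ as the class-frequency-weighted average of its per-class $0$--$1$ losses, i.e.\ $\ell_i^{0-1} = \mathbb E_c[\ell_{i,c}^{0-1}]$, where $\ell_{i,c}^{0-1}$ is the $0$--$1$ loss of $h$ on the (dequantized) real class-$c$ feature set $F^{i,c}$; by linearity of the outer expectation it then suffices to establish, for each fixed class $c$,
\[
\ell_{i,c}^{0-1} \;\le\; 2\widetilde{\ell_{i,c}^{0-1}} - (\widetilde{\ell_{i,c}^{0-1}})^2 + \tfrac{1-\widetilde{\ell_{i,c}^{0-1}}}{\sqrt 2}\sqrt{\mathcal H^{i,c}-\mathcal L_{EM}^{i,c}}.
\]

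The heart of the argument is the distribution shift between $F^{i,c}$ and the EM-fitted mixture $g^{i,c}$. Let $A$ be the region of feature space on which $h$ predicts class $c$, so $\ell_{i,c}^{0-1}=\Pr_{v\sim F^{i,c}}[v\notin A]$. I would bound this probability through a maximal coupling of $F^{i,c}$ and $g^{i,c}$: on the coupling-success event (probability $1-\mathrm{TV}(F^{i,c},g^{i,c})$) the real and synthetic draws coincide, so a correct synthetic prediction forces a correct real one, while the coupling-failure event contributes at most $\mathrm{TV}(F^{i,c},g^{i,c})$; treating the synthetic sample $\widetilde F^{i,c}$ as a draw from $g^{i,c}$ so that its empirical $0$--$1$ loss $\widetilde{\ell_{i,c}^{0-1}}$ plays the role of the population loss of $h$ under $g^{i,c}$, and carefully bookkeeping the two contributions — the quadratic term being exactly $1-(1-\widetilde{\ell_{i,c}^{0-1}})^2$ and the factor $1-\widetilde{\ell_{i,c}^{0-1}}$ being the synthetic accuracy that multiplies the shift term — one lands on $\ell_{i,c}^{0-1} \le 2\widetilde{\ell_{i,c}^{0-1}} - (\widetilde{\ell_{i,c}^{0-1}})^2 + (1-\widetilde{\ell_{i,c}^{0-1}})\,\mathrm{TV}(F^{i,c},g^{i,c})$.

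It then remains to translate $\mathrm{TV}(F^{i,c},g^{i,c})$ into the stated quantities. Pinsker's inequality gives $\mathrm{TV}(F^{i,c},g^{i,c}) \le \tfrac{1}{\sqrt 2}\sqrt{\mathrm{KL}(F^{i,c}\,\|\,g^{i,c})}$, and by the definitions of $\mathcal H^{i,c}$ and $\mathcal L_{EM}^{i,c}$ their difference equals $\mathrm{KL}(F^{i,c}\,\|\,g^{i,c})\ge 0$, being the expected log-ratio $\mathbb E_{v\sim F^{i,c}}[\log F^{i,c}(v)-\log g^{i,c}(v)]$ (the self-entropy term cancels the density of $F^{i,c}$, and the EM log-likelihood supplies $\log g^{i,c}$). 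Substituting and taking $\mathbb E_c$ of the per-class inequalities (again only linearity is needed) yields the claimed bound.

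The main obstacle is twofold. First, the dequantization hypothesis is not cosmetic: if $F^{i,c}$ were the raw discrete empirical measure, both $\mathcal H^{i,c}$ and $\mathrm{KL}(F^{i,c}\,\|\,g^{i,c})$ would be ill-defined or infinite and the bound vacuous, so the dequantizing noise must be chosen small enough that the $0$--$1$ loss and EM log-likelihood are essentially unchanged yet large enough that all densities exist; this is the step where I would be most careful. Second, $h$ is trained on $\widetilde D \supseteq \widetilde F^{i,c}$, so $\widetilde{\ell_{i,c}^{0-1}}$ is a training error on the synthetic sample rather than a genuine population quantity under $g^{i,c}$; pinning down the precise algebraic shape of the bound — as opposed to the cruder relaxation $\widetilde{\ell_{i,c}^{0-1}} + \tfrac{1}{\sqrt 2}\sqrt{\mathcal H^{i,c}-\mathcal L_{EM}^{i,c}}$ — is exactly where the coupling bookkeeping has to be carried out tightly.
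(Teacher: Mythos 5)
Your skeleton is the same as the paper's: condition on the class, relate the real and synthetic per-class accuracies of $h$ through a coupling whose failure probability is the total variation between $F^{i,c}$ and the fitted mixture, convert total variation to $\sqrt{\tfrac12(\mathcal H^{i,c}-\mathcal L_{EM}^{i,c})}$ via Pinsker and the identity $\mathrm{KL} = \mathcal H^{i,c}-\mathcal L_{EM}^{i,c}$, and take $\mathbb E_c$ at the end (using that the class marginals of the real and synthetic datasets agree, since $|\widetilde F^{i,c}|=|F^{i,c}|$). You also correctly flag the dequantization caveat, which the paper makes as well, and your orientation of the KL divergence ($F^{i,c}$ relative to $g^{i,c}$) is the one that actually matches the entropy-minus-log-likelihood decomposition.

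The one place your sketch does not hold together as described is the ``careful bookkeeping'' step, and your intuition about it is inverted. A maximal coupling of $F^{i,c}$ and $g^{i,c}$ gives, directly, the \emph{additive} bound $\ell_{i,c}^{0-1}\leq \widetilde{\ell_{i,c}^{0-1}}+\mathrm{TV}$; no amount of bookkeeping on that coupling produces the quadratic form, because the success event either preserves correctness or it doesn't --- there is no mechanism by which the synthetic accuracy comes to \emph{multiply} the shift term. The paper gets the product form $\mathrm{Acc}(h;\beta|c)\times\bigl(\mathrm{Acc}(h;\beta|c)-\mathrm{div}_{TV}\bigr)$ by a genuinely different decomposition: it writes the joint event $\{h(y)=c \text{ and } \iota(x)=y\}$ as $\Pr[h(y)=c]$ times a conditional coupling probability, then bounds the latter using the triangle inequality for total variation together with $\mathrm{div}_{TV}(\mathbb Q,\mathbb Q|E)=\mathbb Q(\overline E)$, which is where the second factor of the accuracy enters. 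That said, your route still \emph{proves the stated theorem}: the additive bound is in fact at least as strong as the quadratic one, since
\begin{equation*}
\bigl(2\widetilde{\ell_{i,c}^{0-1}}-(\widetilde{\ell_{i,c}^{0-1}})^2+(1-\widetilde{\ell_{i,c}^{0-1}})\,\mathrm{TV}\bigr)-\bigl(\widetilde{\ell_{i,c}^{0-1}}+\mathrm{TV}\bigr)=\widetilde{\ell_{i,c}^{0-1}}\bigl(1-\widetilde{\ell_{i,c}^{0-1}}-\mathrm{TV}\bigr),
\end{equation*}
which is nonnegative when $\widetilde{\ell_{i,c}^{0-1}}+\mathrm{TV}\leq 1$, and when $\widetilde{\ell_{i,c}^{0-1}}+\mathrm{TV}>1$ the quadratic right-hand side exceeds $1$ and the claim is vacuous. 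So to make your argument complete, replace the promised bookkeeping with the one-line maximal-coupling bound plus this comparison; you then obtain the theorem (indeed a sharper statement) by a more elementary argument than the paper's.
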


This theorem shows that the 0-1 loss $\ell_{i}^{0-1}$ of the server-trained classifier head $h$ on the local client data is bounded by a term that depends on: a) its class-wise 0-1 loss on synthetic features $\widetilde{\ell_{i,c}^{0-1}}$; b) the loss of the local EM algorithm $\mathcal L_{EM}^{i,c}$ on each class, and c) local class-wise self-entropy $\mathcal H^{i,c}$. 

This bound enables FedPFT to offer a server-side guarantee that the global classifier head's local accuracy for each client is high \emph{if its accuracy on synthetic features is high}, contingent upon an estimation of the EM loss and self-entropy provided by the clients.
Note the 0-1 loss on the server side on synthetic data should be half the client-side target to allow for an achievable EM loss target.

\subsection{Communication cost of FedPFT} 
Here, we estimate the communication cost of FedPFT and compare it to sending the classifier head or sending the raw features. Denoting by $d$ the feature dimension, $K$ the number of components, and $C$ the number of classes, we calculate the communication cost of FedPFT for different covariance families:
{\setlength{\abovedisplayskip}{10pt}
\setlength{\belowdisplayskip}{10pt}
\setlength{\abovedisplayshortskip}{20pt}
\setlength{\belowdisplayshortskip}{20pt}
\begin{align}
    &\text{Cost}(\mathcal{G}_\text{full}): (2d+\frac{d^2-d}{2})+1)KC  \sim \mathcal{O}(d^2CK),  \label{eq:com_cost}\\ 
    &\text{Cost}(\mathcal{G}_\text{diag}): (2d + 1)KC  \sim \mathcal{O}(2dCK), \\ 
    &\text{Cost}(\mathcal{G}_\text{spher}): (d + 2)KC  \sim \mathcal{O}(dCK). \label{eq:com_cost_spher}
\end{align}}Equations (\ref{eq:com_cost} - \ref{eq:com_cost_spher}) indicate that the communication cost of FedPFT is independent of the number of samples of each client $n_i$. Therefore, FedPFT can scale better compared to sending raw data or raw features when the number of samples is large. More specifically, when $n_i \gtrsim
 2dCK$, it is more communication efficient to send $\mathcal{G}_\text{diag}(K)$ than send the raw features. This is also shown in Figure $\ref{fig:features}$ (left). Similarly, equation (\ref{eq:com_cost_spher}) shows that the communication cost of $\mathcal{G}_\text{spher}(K=1)$ is equal to the communication cost of sending the classifier head, which is $(Cd + C)$. Therefore, GMMs can have the same communication cost as FedAvg. Further, FedPFT supports heterogeneous communication resources, as each client can utilize a different $K$.


\subsection{Evaluating against reconstruction attacks}
We conduct reconstruction attacks on the various feature-sharing schemes described in this paper using the CIFAR10 dataset. Figure \ref{fig:privacy} verifies the vulnerability of raw feature sharing -- an attacker with access to in-distribution data (i.e. CIFAR-10 train set) can obtain high-fidelity reconstructions of the private data of clients (i.e. CIFAR-10 test set). This necessitates sharing schemes beyond sending raw features. 

The attack we present involves training a generative model on (extracted feature, image) pairs and then performing inference on received feature embeddings. We apply the same attack on features sampled via FedPFT and DP-FedPFT. The reconstructions are \emph{set-level}, and we present the closest image among the entire reconstruction set by SSIM. The resulting reconstructions do not resemble the original image (Figure \ref{fig:privacy}). Quantitative results reporting image similarity metrics can be found in Table \ref{tab:oracle-recon-compare}. 

For full experimental details, further quantitative results, and samples, please see Appendix \ref{sec:recon-details}. We also include a full description of our threat model and results on different backbones (e.g. our attacks are stronger on MAEs), which could not be included in the main body due to space limitations.

\begin{figure}[t]
\centering
\includegraphics[width=0.99\columnwidth]{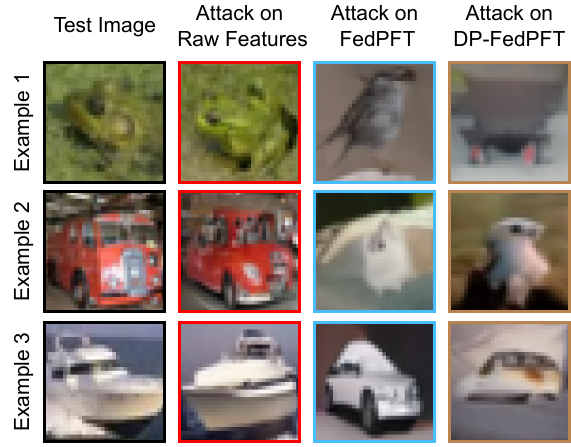}
\caption{Results of reconstruction attacks on feature-sharing schemes using three random test images from CIFAR-10. Attackers can reconstruct raw features (middle-left) to generate images resembling real data (left). However, the same reconstruction on FedPFT (middle-right) and DP-FedPFT (right) does not resemble the real image. \emph{For more reconstruction examples see Figures \ref{fig:raw-feature-recon} and \ref{fig:gmm-recon} in the Appendix. \label{fig:privacy}}}
\end{figure}

\begin{table}[!t]
    \fontsize{9pt}{9pt}\selectfont
    \centering
    \begin{tabular}{lrrr}
    \toprule
    \multirow{2}{*}{\shortstack{\textbf{Reconstruction} \\ \textbf{source}}} & \multicolumn{3}{c}{\textbf{Similarity measure}} \\
    \cmidrule{2-4}
     & \multirow{1}{*}{\emph{PSNR $\uparrow$}} & \multicolumn{1}{c}{\emph{LPIPS $\downarrow$}} & \multicolumn{1}{c}{\emph{SSIM $\uparrow$}} \\
    \midrule
    Raw features & 16.5 & .257 & .535 \\
    FedPFT & 14.7 & .423 & .508  \\
    DP-FedPFT & 14.6 & .511 & .483 \\
    \midrule
    Train set & 14.7 & .389 & .500  \\
    \bottomrule
    \end{tabular}
    \caption{Quantitative metrics for \emph{set-level reconstruction}. We report image similarity on the top 1\% ($n=90$) of test set images, by their SSIM to a member of the reconstruction set. As a baseline, we report the results of treating the train set as a reconstruction set. For further elaboration on our threat model, see Appendix \ref{sec:recon-details}.}\label{tab:oracle-recon-compare}
    \vspace{-3em}
\end{table}





\vspace{-1em}
\section{Conclusion}
We introduce FedPFT, a one-shot federated learning (FL) method leveraging foundation models for better accuracy and communication efficiency. FedPFT utilizes per-client parametric models of features extracted from foundation models for data-free knowledge transfer. 

Our experiments show that FedPFT improves the communication-accuracy frontier in a wide range of data-heterogeneity settings. Moreover, we showed that FedPFT is amenable to formal privacy guarantees via differential privacy, exhibiting good privacy-accuracy tradeoffs. Our theoretical analysis demonstrates that FedPFT has server-side guarantees on the local accuracy of clients. Additionally, we conduct reconstruction attacks on feature-sharing schemes and demonstrate the privacy risks of sending real features.


\bibliography{main}
\bibliographystyle{icml2023}

\newpage
\appendix
\onecolumn

\section{Code} \label{code:gmm}
In this section, we provide code for the main component of FedPFT which is extracting the GMMs for class-conditional features. We consider the case of $k=1$ and refer readers to the DP-EM \citet{pmlr-v54-park17c} method for the general case. Also, in this section, we assume each client can access only one class for simplicity and without losing generality. 
\lstinputlisting[language=Python]{code/gmm.py}

\section{DP-FedPFT}
This section provides detailed definitions and proofs for Theorem \ref{th:dp-m=1}. 
\label{apx:proofs}

\begin{definition}
(Differential Privacy, \citealt{dwork2006}). A randomized algorithm $\mathcal{M} : \mathcal{U} \rightarrow \Theta $ is ($\epsilon, \delta)$-
differentially private if for every pair of neighboring datasets $ D, D' \in \mathcal{U}$ for all $S\subseteq \Theta$, we have 
\begin{equation}
\Pr[\mathcal{M}(D) \in S] \leq {\rm e}^{\epsilon} \Pr[\mathcal{M}(D') \in S] + \delta.
\end{equation}
\end{definition}

\begin{lemma} \label{lem:gauss}
    (Gaussian Mechanism, \citealt{dwork2006,dwork2014algorithmic}).
    Let $\epsilon>0$ and  let $g : D^{n} \rightarrow \mathbb{R}^{d}$ be a function with $\ell_2$-sensitivity ${\Delta}_g$. Then the Gaussian mechanism 
    \begin{equation}
        \mathcal{M}(D) := g(D) + \mathcal{N}\left(0, {\left(\frac{{\Delta}_g\sqrt{2ln(2/\delta)}}{\epsilon} \right)}^2 \cdot \mathbb{I}_{d\times d}\right),
        \label{eq:gauss}
    \end{equation}
 satisfies $(\epsilon, \delta)$-differential privacy.
\end{lemma}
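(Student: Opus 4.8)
The plan is to prove the lemma through the \emph{privacy loss random variable} together with a Gaussian tail bound, following the classical Dwork--Roth argument. Fix neighboring datasets $D, D'$ and write $v := g(D') - g(D)$, so that $\|v\|_2 \le \Delta_g$ by the sensitivity assumption. Let $p$ and $p'$ denote the densities of $\mathcal{M}(D)$ and $\mathcal{M}(D')$, i.e. the spherical Gaussians $\mathcal{N}(g(D), \sigma^2\mathbb{I})$ and $\mathcal{N}(g(D'), \sigma^2\mathbb{I})$ with $\sigma = \frac{\Delta_g\sqrt{2\ln(2/\delta)}}{\epsilon}$, and define the privacy loss $L(x) := \ln\frac{p(x)}{p'(x)}$. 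The first step is the standard reduction: setting $A := \{x : L(x) > \epsilon\}$, for every measurable $S$ one splits $\Pr[\mathcal{M}(D)\in S]$ over $S\cap A^c$ and $S\cap A$; on $A^c$ the likelihood ratio is at most $e^\epsilon$, giving $\Pr[\mathcal{M}(D)\in S\cap A^c] \le e^\epsilon\Pr[\mathcal{M}(D')\in S]$, while the remaining mass is at most $\Pr_{x\sim\mathcal{M}(D)}[L(x) > \epsilon]$. Hence it suffices to show $\Pr_{x\sim\mathcal{M}(D)}[L(x) > \epsilon] \le \delta$.

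Second, I would compute the law of $L$. Writing $x = g(D) + z$ with $z\sim\mathcal{N}(0,\sigma^2\mathbb{I})$, a direct expansion of the two Gaussian densities gives
\begin{equation}
L = \frac{1}{2\sigma^2}\big(\|z-v\|_2^2 - \|z\|_2^2\big) = \frac{\|v\|_2^2}{2\sigma^2} - \frac{\langle z, v\rangle}{\sigma^2}.\nonumber
\end{equation}
Since $\langle z, v\rangle \sim \mathcal{N}(0, \sigma^2\|v\|_2^2)$, the privacy loss is itself Gaussian, $L\sim\mathcal{N}(\mu_L, \sigma_L^2)$ with $\mu_L = \frac{\|v\|_2^2}{2\sigma^2}$ and $\sigma_L^2 = \frac{\|v\|_2^2}{\sigma^2} = 2\mu_L$. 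Thus the $d$-dimensional problem collapses to a one-dimensional tail estimate for a single Gaussian whose mean is exactly half its variance.

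Third, I would bound this tail and substitute the prescribed $\sigma$. Standardizing, $\Pr[L > \epsilon] = \Pr[N > t]$ for a standard normal $N$, where $t = \frac{\epsilon - \mu_L}{\sigma_L} = \frac{\epsilon}{\sigma_L} - \frac{\sigma_L}{2}$. As $t$ is decreasing in $\sigma_L$ and $\sigma_L = \|v\|_2/\sigma \le \Delta_g/\sigma$, the tail is largest when $\|v\|_2 = \Delta_g$, so I may take $\sigma_L = \Delta_g/\sigma$. Plugging in $\sigma = \frac{\Delta_g\sqrt{2\ln(2/\delta)}}{\epsilon}$ gives $\sigma_L = \epsilon/\sqrt{2\ln(2/\delta)}$ and $t = \sqrt{2\ln(2/\delta)} - \frac{\epsilon}{2\sqrt{2\ln(2/\delta)}}$. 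Applying the Gaussian tail inequality $\Pr[N > t] \le e^{-t^2/2}$ and discarding the nonnegative term $\frac{\epsilon^2}{8\ln(2/\delta)}$ in $t^2$ yields $\Pr[L > \epsilon] \le e^{-\ln(2/\delta) + \epsilon/2} = \frac{\delta}{2}e^{\epsilon/2}$, which is at most $\delta$ in the intended small-$\epsilon$ regime. Combined with the reduction of the first step, this establishes $(\epsilon,\delta)$-differential privacy.

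The main obstacle is matching the stated constant $\sqrt{2\ln(2/\delta)}$ precisely: the closing inequality $\frac{\delta}{2}e^{\epsilon/2}\le\delta$ requires controlling $\epsilon$ (it holds once $\epsilon \le 2\ln 2$), and recovering sharper constants such as the $\sqrt{2\ln(1.25/\delta)}$ of the original statement would require replacing the crude bound $\Pr[N>t]\le e^{-t^2/2}$ with the tighter Mills-ratio estimate and retaining the discarded $\frac{\epsilon^2}{8\ln(2/\delta)}$ term. The structural steps --- the privacy-loss reduction and the Gaussianity of $L$ --- are routine; essentially all of the delicacy lives in this constant-chasing tail analysis.
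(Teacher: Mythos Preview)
The paper does not provide its own proof of this lemma: it is stated with citations to \cite{dwork2006,dwork2014algorithmic} and invoked as a black box in the proof of Theorem~\ref{th:dp-m=1}. So there is no ``paper's proof'' to compare against --- your reconstruction of the classical privacy-loss argument is exactly what one finds in the cited references, and your computation of the distribution of $L$ and the tail reduction are correct.

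One genuine caveat you already flag deserves emphasis: your final inequality $\tfrac{\delta}{2}e^{\epsilon/2}\le\delta$ only holds for $\epsilon\le 2\ln 2$, whereas the lemma as stated in the paper says merely ``$\epsilon>0$''. This is not a flaw in your argument so much as an imprecision in the lemma's statement --- the classical Gaussian mechanism theorem (Theorem~A.1 in Dwork--Roth) is stated for $\epsilon\in(0,1)$, and with the constant $\sqrt{2\ln(1.25/\delta)}$ rather than the paper's $\sqrt{2\ln(2/\delta)}$. The paper's slightly larger constant buys a bit of slack, but not enough to cover all $\epsilon>0$ without additional constraints. Since the paper only \emph{applies} the lemma (in the proof of Theorem~\ref{th:dp-m=1}) rather than proving it, this discrepancy is never resolved there either; your honest accounting of where the constant-chasing bites is appropriate.
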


\begin{proof}{(Theorem \ref{th:dp-m=1})} 
We wish to apply Lemma \ref{lem:gauss} with $g$ mapping a feature dataset to its Gaussian mixture approximation obtained via EM-method. The post-processing property (Proposition 2.1 of \citealt{dwork2014algorithmic}) ensures that the projection onto the positive semi-definite symmetric matrices preserves differential privacy.
For $k=1$, denote by $\hat{\boldsymbol{\mu}},\hat{\boldsymbol{\Sigma}}$ the mean and covariance matrix of this Gaussian approximation; in this instance they coincide with the mean and covariance matrix of the dataset. 
We may use the $\ell_2$-sensitivity of the function $(\hat{\boldsymbol{\mu}},\hat{\boldsymbol{\Sigma}})$ to apply the aforementioned Lemma.

To begin with, for any triplet of independent random variables $(X,Y,\epsilon)$ with $X,Y$ with values in $d\times 1$ matrices with real entries and $\epsilon $ in $\{0,1\}$ with $\mathbb P(\epsilon=1)=p$, denote $X^\epsilon:=\epsilon X+(1-\epsilon) Y$ the mixture of $X$ and $Y$:
\begin{align}
    \mathbb E(X^\epsilon) &=\mathbb E(X) +  (1-p)(\mathbb E(Y)-\mathbb E(X)) ; \\
     \mathrm{Cov}\left( X^\epsilon,X^\epsilon \right)  &=
\mathrm{Cov}\left( X,X \right) +  (1-p) \left[\mathrm{Cov}(Y,Y)-\mathrm{Cov}(X,X) \right] + p(1-p)(\mathbb E(X)-\mathbb E(Y))(\mathbb E(X)-\mathbb E(Y))^T.
\end{align}
If $X$ follows the uniform distribution on a dataset $D'$ with $n-1$ element, $Y$ is deterministic with value at some $x_n\in \mathbb R^d$ and $p=1-1/n$ ; then $X^\epsilon$ follows the uniform distribution on the dataset $D'' = D'\cup\{x_n\}$.  Assuming $D'$ and $D''$ are in the ball of radius 1, we have:
\begin{align*}
    \|\mathbb E(X^\epsilon) - \mathbb E(X)\|_2 &= (1-p)\|(\mathbb E(Y)-\mathbb E(X))\|_2 \leq \frac{2}{n} ; \\
 \|\mathrm{Cov}\left( X^\epsilon,X^\epsilon \right) - \mathrm{Cov}\left( X,X \right)\|_F^2 &=   (1-p)^2 \|\mathrm{Cov}(X,X)\|_F^2 \\ & + p^2(1-p)^2\mathrm{Tr}\left(\mathbb E(X)-\mathbb E(Y)) (\mathbb E(X)-\mathbb E(Y))^T(\mathbb E(X)-\mathbb E(Y))(\mathbb E(X)-\mathbb E(Y))^T\right) ; \\
 &-2p(1-p)^2\mathrm{Tr}\left(\mathrm{Cov}(X,X) (\mathbb E(X)-\mathbb E(Y))(\mathbb E(X)-\mathbb E(Y))^T\right)\\
 &=(1-p)^2\|\ \mathrm{Cov}(X,X) \|_F^2+p^2(1-p)^2\|\mathbb E(X)-\mathbb E(Y)\|_2^4\\ &+2p(1-p)^2(\mathbb E(X)-\mathbb E(Y))^T \mathrm{Cov}(X,X)  (\mathbb E(X)-\mathbb E(Y)) \\ 
 &\leq \frac{1}{n^2}\|\hat{\boldsymbol{\Sigma}}\|_F^2 +16\frac{(n-1)^2}{n^4}  + \frac{8(n-1)}{n^3}\rho(\hat{\boldsymbol{\Sigma}})\\
 &\leq \frac{1}{n^2}\left(4+16+8\times 2\right) \\
  \|\mathrm{Cov}\left( X^\epsilon,X^\epsilon \right) - \mathrm{Cov}\left( X,X \right)\|_F&\leq \frac{6}{n}
\end{align*}
Where $\|\cdot\|_F$ is the Frobenius norm, $\rho$ return the largest eigenvalue. We used the bounds $\rho(\hat{\boldsymbol{\Sigma}})\leq \|\hat{\boldsymbol{\Sigma}}\|_F$ and $  \|\hat{\boldsymbol{\Sigma}}\|_F\leq 2$; the latter may be obtained in a similar fashion by developing $\mathrm{Cov}(X,X)$ using $X=\sum_{i=1}^{n-1} \eta_i X_i$ where each $X_i$ is deterministic at some $x_i$ and $\eta$ is uniform on the set of  binary vectors of length $n-1$ satisfying $\sum_i\eta_i=1$.

 Finally, the $\ell_2$-sensitivity of $(\hat{\boldsymbol{\mu}},  \hat{\boldsymbol{\Sigma}})$ is $\sqrt{\left(\frac{2}{n}\right)^2+\left(\frac{6}{n}\right)^2}=\frac{2\sqrt{10}}{n}$. Inserting the $\ell_2$-sensitivity in equation (\ref{eq:gauss}) yields the result. 
\end{proof}

\begin{remark} Note that in \cite{pmlr-v54-park17c}, they derive an $\ell_2$-sensitivity bound for $\hat{\boldsymbol{\Sigma}}$ of $2/n$ instead of our $6/n$. The reason is that one may replace $(\hat{\boldsymbol{\mu}},\hat{\boldsymbol{\Sigma}})$ by $(\hat{\boldsymbol{\mu}},\hat{\boldsymbol{\Sigma}}+\hat{\boldsymbol{\mu}} \hat{\boldsymbol{\mu}}^T)$ to reduce the $\ell_2$-sensitivity of the covariance part. This leads to the improved total $\ell_2$-sensitivity of $\frac{2\sqrt{2}}{n}$. 

On the server side,  the Gaussian mechanism would return
$(\hat{\boldsymbol{\mu}}+\Delta \boldsymbol{\mu},\hat{\boldsymbol{\Sigma}}-\hat{\boldsymbol{\mu}}\hat{\boldsymbol{\mu}}^T + \Delta\boldsymbol{\Sigma})$. The server would thus reconstruct $\hat{\boldsymbol{\Sigma}}$ by computing
\begin{align*}
    \hat{\boldsymbol{\Sigma}}_{\text{server}} &=\hat{\boldsymbol{\Sigma}} -\hat{\boldsymbol{\mu}}\hat{\boldsymbol{\mu}}^T + \Delta {\boldsymbol{\Sigma}} +    (\hat{\boldsymbol{\mu}}+\Delta\boldsymbol{\mu})(\hat{\boldsymbol{\mu}}+\Delta\boldsymbol{\mu})^T \\ 
    &= \hat{\boldsymbol{\Sigma}} + \hat{\boldsymbol{\mu}} \Delta \boldsymbol{\mu}^T + \Delta\boldsymbol{\mu} \hat{\boldsymbol{\mu}}^T +  \Delta\boldsymbol{\mu} \Delta\boldsymbol{\mu}^T + \Delta\boldsymbol{\Sigma}.
\end{align*} 
Therefore, for a given coefficient $(i,j)$ of the reconstructed covariance matrix $\hat{\boldsymbol{\Sigma}}_{\text{server}}$, the error term is $ \hat \mu_i \Delta \mu_j +  \hat \mu_j \Delta \mu_i +  \Delta \mu_i\Delta \mu_j +  \Delta \Sigma_{ij}$. Assuming a Gaussian noise of standard deviation $\sigma=\frac{2\sqrt{2}}{n}C$ with $C=\frac{\sqrt{2\ln(2/\delta)}}{\epsilon}$ for both $\Delta \boldsymbol\Sigma$ and $\Delta \boldsymbol\mu$ the standard deviation of the reconstruction error is then 
$$\sigma_{\text{reconstruction}} = \sqrt{(\hat\mu_i^2+\hat\mu_j^2) \sigma^2+\sigma^4+\sigma^2}\leq \sigma \sqrt{2+\sigma^2}\leq \frac{2\sqrt{2}C\sqrt{2+C^2/n^2}}{n}.$$

Compared to the reconstruction error of the Gaussian mechanism we used in our experiments $\sigma_{\text{reconstruction}}' = \frac{2\sqrt{10}}{n}C$, the more sophisticated methods we described above may allow better  reconstruction error without sacrificing the differential privacy for big enough datasets, ie if $n\geq \frac{\sqrt{2\ln(2/\delta)}}{\sqrt{3}\epsilon}$.

Another consequence is that the noises on different elements of the covariance matrix are not independent.

\end{remark}
\begin{remark}
The assumption of normalized features, i.e., $||f(\mathbf{x})||_2 \leq 1$, in Theorem \ref{th:dp-m=1} does not limit the performance of networks with soft-max loss function since they both have the same expressive power as shown in Proposition 3.A in \citet{zhang2023normalization}.
\end{remark}

\section{Control over 0-1 loss}\label{sec:bounds}
    Since the projection of from true feature distributions to mixture of Gaussian is lossy, one may control the consequence on the accuracy of the classifier: 
    ``Given a classifier $h:\mathcal X\rightarrow \{1,\cdots,C\}$ trained on a synthetic dataset, what guarantees do we have on the accuracy of $h$ used to classify the true dataset?".

     We adress this question by proving a theoretical bound that my be rewritten using the loss of the local client training and the accuracy of the server model.

    \subsection{A Theoretical bound}
     The setting may be formalized as follows: 
    \begin{itemize}
        \item a finite set of classes $\mathcal C = \{1,\cdots, C\}$, a feature space $\mathcal X$ and an approximate feature space $\mathcal Y$;
        \item a true feature/label distribution $\alpha$ on $\mathcal C \times \mathcal X$
        and  an approximate feature/label distribution $\beta$ on $\mathcal C \times \mathcal Y$;
        \item a classifier $h_\alpha:\mathcal X\rightarrow \mathcal C$ trained on $\alpha$ and a classifier $h_\beta:\mathcal Y \rightarrow \mathcal C$ trained on $\beta$.
        \item an feature approximator mapping $\iota:\mathcal X\rightarrow \mathcal Y$
    \end{itemize}
   
    We denote by $\mathrm {Acc}(h,\alpha)$  the accuracy  of the predictor $h$ evaluated using the distribution $\alpha$.
    
   We  prove  a bound adapted to the purpose of linking the EM loss to the accuracy. 
    \begin{theorem} Let $\alpha,\beta$ be probability distributions on $\mathcal X\times \mathcal C$ and $\mathcal Y\times \mathcal C$ with same marginal $\eta$ on $\mathcal C$. Let $h:\mathcal Y\rightarrow \mathcal C$ be any predictor and let $\iota : \mathcal X\rightarrow \mathcal Y$ be any measurable map. We have 
        $$\mathrm{Acc}(h\circ \iota ;\alpha) \geq \mathbb E_c\left[\mathrm{Acc}(h;\beta|c) \times \left(\mathrm{Acc}(h; \beta|c) -  \mathrm{div}_{TV}(\iota\#(\alpha|c), \beta|c) \right)\right] $$
    \end{theorem}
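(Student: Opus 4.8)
The plan is to establish the bound class by class and then average over $\mathcal C$. Fix a class $c \in \mathcal C$ and let $A_c := h^{-1}(\{c\}) \subseteq \mathcal Y$ be the decision region of $h$ for class $c$; since $h$ takes values in a finite set it is measurable, so $A_c$ is measurable, and since $\iota$ is measurable, $\iota^{-1}(A_c)$ is measurable as well. By the definition of label-conditioned accuracy, $\mathrm{Acc}(h;\beta|c) = (\beta|c)(A_c)$, and by the change-of-variables formula for pushforwards, $\mathrm{Acc}(h\circ\iota;\alpha|c) = (\alpha|c)\big(\iota^{-1}(A_c)\big) = \big(\iota\#(\alpha|c)\big)(A_c)$.

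Next I would invoke the variational characterization of total variation: for probability measures $\mu,\nu$ on a common measurable space and any measurable set $S$, $\mu(S) \ge \nu(S) - \mathrm{div}_{TV}(\mu,\nu)$. Applying this with $\mu = \iota\#(\alpha|c)$, $\nu = \beta|c$, and $S = A_c$ gives
\begin{equation*}
\mathrm{Acc}(h\circ\iota;\alpha|c) \;\ge\; \mathrm{Acc}(h;\beta|c) - \mathrm{div}_{TV}\big(\iota\#(\alpha|c),\,\beta|c\big).
\end{equation*}
Writing $a_c := \mathrm{Acc}(h;\beta|c) \in [0,1]$ and $D_c := \mathrm{div}_{TV}(\iota\#(\alpha|c),\beta|c) \ge 0$, this reads $\mathrm{Acc}(h\circ\iota;\alpha|c) \ge a_c - D_c$.

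To reach the slightly weaker but more convenient product form in the statement, I would then note that $a_c - D_c \ge a_c(a_c - D_c)$ whenever $a_c \ge D_c$, since $(a_c-D_c)(1-a_c) \ge 0$, while if $a_c < D_c$ the quantity $a_c(a_c-D_c)$ is non-positive and accuracy is non-negative; in either case $\mathrm{Acc}(h\circ\iota;\alpha|c) \ge a_c(a_c - D_c)$ for every $c$. Finally, because $\alpha$ and $\beta$ share the marginal $\eta$ on $\mathcal C$ (and $\mathcal C$ is finite, so the disintegrations $\alpha|c$, $\beta|c$ are unambiguous), the tower rule gives $\mathrm{Acc}(h\circ\iota;\alpha) = \mathbb E_{c\sim\eta}\big[\mathrm{Acc}(h\circ\iota;\alpha|c)\big]$; taking $\mathbb E_c$ of the per-class bound yields the claim.

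The argument is essentially routine, so there is no real obstacle; the only step that superficially looks as if it goes "the wrong direction" — replacing the sharp bound $a_c - D_c$ by the product $a_c(a_c - D_c)$ — is harmless precisely because accuracies lie in $[0,1]$, and I would spell out the case split on the sign of $a_c - D_c$ so that this is transparent. The only modeling point worth stating explicitly is that all the $|c$ conditionals are legitimate disintegrations, which is immediate since $\mathcal C$ is finite.
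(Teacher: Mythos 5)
Your proof is correct, and it takes a genuinely different and more elementary route than the paper's. The paper fixes a class $c$, introduces a coupling $\mathbb P$ of $\alpha|c$ and $\beta|c$, lower-bounds the accuracy by the probability of the joint event $\{h(\iota(x))=c \text{ and } \iota(x)=y\}$, factors that probability as $\mathrm{Acc}(h;\beta|c)$ times a conditional matching probability, and then controls the latter via the maximal-coupling characterization of total variation together with the triangle inequality and the identity $\mathrm{div}_{TV}(\mathbb Q,(\mathbb Q|E))=\mathbb Q(\overline E)$; the product form of the bound emerges directly from this factorization. You instead apply the variational (sup-over-measurable-sets) characterization of total variation to the single decision region $A_c=h^{-1}(\{c\})$, which immediately gives the \emph{linear} per-class bound $\mathrm{Acc}(h\circ\iota;\alpha|c)\geq \mathrm{Acc}(h;\beta|c)-\mathrm{div}_{TV}(\iota\#(\alpha|c),\beta|c)$, and then you deliberately weaken it to the product form via the elementary case split on the sign of $a_c-D_c$. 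Both arguments are sound, but yours buys something real: it shows the stated theorem is not tight, since $\mathbb E_c[\max(0,a_c-D_c)]$ dominates $\mathbb E_c[a_c(a_c-D_c)]$ whenever $a_c\in[0,1]$. This is directly relevant to the paper's own discussion in the comparison subsection, where the quadratic dependence on $\mathrm{Acc}(h;\beta|c)$ is flagged as causing ``quick degradation of the theoretical accuracy guarantee as the measured server accuracy drops''; your derivation shows that degradation is an artifact of the coupling-based proof rather than intrinsic, and the downstream Pinsker/EM-loss corollary would hold with the sharper linear bound as well. The only thing the coupling argument arguably buys is that it generalizes more readily to settings where one wants to track the matching event $\iota(x)=y$ itself, but for the inequality as stated your direct argument is preferable.
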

    \begin{proof}
        Consider any coupling $\mathbb P$ of $\alpha$ and $\beta$ over $\mathcal C$ ie a distribution on $\mathcal C\times\mathcal X\times \mathcal Y$ whose marginals on $\mathcal C\times \mathcal{X}$ and $\mathcal C\times \mathcal Y$ are $\alpha$ and $\beta$ respectively\footnote{ We define the conditioning of a probability distribution $\mathbb P$ by an  event $E$ having $\mathbb P(E)>0$ is 
        the distribution $\frac{\mathbf 1_{E}\mathbb P}{\mathbb P (E)}$ so that it is still a distribution on the same underlying measurable space.
        }.
        
        Given some $c$, 
        \begin{align}
            \mathrm {Acc}(h\circ \iota; \alpha|c) &= (\alpha|c)(\{x~:~ h\circ \iota(x)=c\})  \\
            &=  (\mathbb P|c)(\{(x,y)~:~ h\circ \iota(x)=c\})\\
             &\geq  (\mathbb P|c)(\{(x,y)~:~ h\circ \iota(x)=c ~\text{ and }~\iota(x)=y\})\\
             &= (\mathbb P|c)(\{ (x,y)~:~  h(y)=c\}) 
             \times  (\mathbb P|c,h(y)=c)(\{ (x,y)~:~  \iota(x)=y\})\\
              &= \underbrace{(\beta|c)(\{ y~:~  h(y)=c\}) }_{\mathrm{Acc}(h;\beta|c)}
             \times  (\mathbb P|c,h(y)=c)(\{ (x,y)~:~  \iota(x)=y\})
        \end{align}
        Since neither the left hand side of the first line nor the first term in the right hand side of the last line depend on the coupling $\mathbb P$ we chose, we may choose it to maximize  $(\mathbb P|c,h(y)=c)(\{ (x,y)~:~  \iota(x)=y\})=1-\mathbb E(\mathbf 1_{Z\neq Y})$ with $Y$ drawn from $(\beta|c,h(y)=c)$ and $Z$ drawn from $\iota\# (\alpha|c)$. We recognize a caracterization of the total variation as the minimal value of $\mathbb E(\mathbf 1_{Z\neq Y})$ over couplings of $Z$ and $Y$. Then:
        \begin{align}
          (\mathbb P|c,h(y)=c)(\{ (x,y)~:~  \iota(x)=y\})&\geq   1-\mathrm{div}_{TV}(\iota \# (\alpha|c) ~;~ (\beta|c,h(y)=c)) \\
          &\geq  1-\mathrm{div}_{TV}(\iota \# (\alpha|c) ~;~ (\beta|c))- \mathrm{div}_{TV}
          ((\beta|c) ~;~ (\beta|c,h(y)=c)) 
          \\ 
           &=1-\mathrm{div}_{TV}(\iota \# (\alpha|c) ~;~ (\beta|c))- (1-\mathrm{Acc}(h(y);\beta|c))
           \\&=\mathrm{Acc}(h(y);\beta|c)-\mathrm{div}_{TV}(\iota \# (\alpha|c) ~;~ (\beta|c)).
        \end{align}
        We use triangular inequality of total variation to get the second line. The third line is obtained by applying the general property that for any probability distribution $\mathbb Q$ and any event $E$ with $\mathbb Q(E)>0$ we have $$\mathrm{div}_{TV}(\mathbb Q,(\mathbb Q|E)) = \mathbb Q(\overline E).$$
    We thus have 
    $$ \mathrm {Acc}(h\circ \iota; \alpha|c)  \geq \mathrm{Acc}(h;\beta|c) \times \left[\mathrm{Acc}(h(y);\beta|c)-\mathrm{div}_{TV}(\iota \# (\alpha|c) ~;~ (\beta|c))\right]$$
        The result follows by taking expectation over $c$ distributed along $\eta$.
    \end{proof}

    \subsection{Application to FedPFT}
    We now switch back to the notations used in the preliminaries.
    Using Pinsker inequality \cite{csiszar2011information}, we obtain the following bound for the accuracy of the server model 
    \begin{equation}
        \mathrm{Acc}(h, F^i) \geq  \mathbb E_{c}\left[\mathrm{Acc}(h, \widetilde F^{i,c}) \times \left( \mathrm{Acc}(h, \widetilde F^{i,c}) - \sqrt{\frac{1}{2}\left(\mathrm {div}_{KL}(\widetilde F^{i,c} || F^{i,c})\right) } \right)\right].
    \end{equation}
    Since the EM algorithm maximizes the log likelyhood of the Gaussian mixture given dataset samples, we may insert
    $\mathrm{div}_{KL}(\widetilde F^{i,c} || F^{i,c})= \mathcal H^{i,c}-\mathcal L_{EM}^{i,c} $
    to obtain
\begin{equation}
        \mathrm{Acc}(h, F^i) \geq  \mathbb E_{c}\left[\mathrm{Acc}(h, \widetilde F^{i,c}) \times \left( \mathrm{Acc}(h, \widetilde F^{i,c}) - \sqrt{\frac{1}{2}\left( \mathcal H^{i,c}-\mathcal L_{EM}^{i,c})\right) } \right)\right]
    \end{equation}
    where $\mathcal H^{i,c}$ is the self-entropy of the distribution $F^{i,c}$ of features with label $c$ in client $i$ and $\mathcal L_{EM}^{i,c}$ is the log-likelyhood of the Gaussian mixture $\widetilde F^{i,c}$ trained using the EM algorithm.
    The last equation yields inequation \ref{eq:local_guarantee} by replacing accuracies by 0-1 losses.
    Beware that the feature distribution is a priori discrete, in order to evaluate the entropy we need to dequantize the dataset, otherwise, $\mathcal H^{i,c}=+\infty$ and the bound is useless. 
    
\subsection{Comparison to earlier bounds}
 Commonly known bounds on the accuracy of $h$ may be found in \citet{ben2010theory}, but these bounds are unfortunately not directly useful in our setting. Indeed, they depend on an estimation on how much  server/client class predictors differ. 
 
 More precisely, in the limit of perfect accuracy of the server class predictor, ie $\forall i,c,~\mathrm{Acc}(h, \widetilde F^{i,c})\simeq 1$,  our bound yields 
 \begin{equation}
        \mathrm{Acc}(h, F^i) \gtrsim  \mathbb E_{c}\left[ 1 - \sqrt{\frac{1}{2}\left( \mathcal H^{i,c}-\mathcal L_{EM}^{i,c})\right) } \right].
    \end{equation} 
 In this limit, the bound deduced from that of \cite{ben2010theory} contains an additional intractable negative term on the right hand side:
 \begin{equation}
        \mathrm{Acc}(h, F^i) \geq  \mathbb E_{c}\left[ 1 - \sqrt{\frac{1}{2}\left( \mathcal H^{i,c}-\mathcal L_{EM}^{i,c})\right) } \right] - \mathbb E_{c}\min \left(\mathbb E_{x\sim F^{i,c}}\left[\mathbf 1_{h(x)\neq h^*_{\text{client}}(x)}\right] ; \mathbb E_{x\sim \widetilde F^{i,c}}\mathbf 1_{h(x)\neq h^*_{\text{client}}(x)} \right)
    \end{equation}
    where $h^*_{\text{client}}$ is a hypothetical class predictor perfectly fine-tuned by the client with its own data $f(D_i)$.
However, our bound may be less sharp in general as the positive term is in $\mathrm{Acc}(h,\widetilde F^{i,c})^2$. This leads to a quick degradation of the theoretical accuracy guarantee as the measured server accuracy drops. In general, we should have error on the server side at least twice lesser than the target error on the client side in order to hope for achievable EM-loss target on client side.


\clearpage
\section{Experiments}\label{sec:exp-details}
In this section, we provide the full details of all of our experiments and datasets.

\subsection{Implementation details} 
We use the Pytorch library for the implementation of our methods. We use a cluster of 4 NVIDIA v100 GPUs for our experiments. 

\paragraph{Datasets.} We provide the complete details of the dataset we used in this paper in Table \ref{tbl:datasests_full}.

\paragraph{Learning rates and optimizers.} We use Adam with a learning rate of $1e^{-4}$ for training the classifier head for FedPFT, Ensemble, FedFT, and centralized training.

\paragraph{Knowledge Distillation.} For the implementation of KD methods, we use Adam for training the local classifier heads with $1e^{-4}$ learning rate. We locally train the models for 100 epochs and keep the best model on the local test dataset. Then, we distill from the source to the destination model in 50 epochs with the same optimizer and classifier head. We test three temperature values $\{1, 5, 10\}$ and report the best results

\paragraph{FedAvg.} We test three local training epochs $\{50, 100, 200\}$ and three learning rates $\{5e^{-2}, 1e^{-2}, 1e^{-2}\}$ and report the best result.

\paragraph{FedYogi.} We set the server learning rate $\eta$ to 0.01, $\beta_1$, $\beta_2$, and $\tau$ to 0.9, 0.99, and 0.001 respectively. We also initialize vector $v_t$ to $1e^{-6}$. For the rest of the hyperparameters, including the clients' learning rates, we use the same hyperparameters as FedAvg.

\paragraph{FedProx.} We add a regularizer with a weight of $0.01$ to the loss function of each client during local training to penalize divergence. We use the same hyperparameters as FedAvg for the rest of the hyperparameters.

\paragraph{DSFL.} We set the top-K sparsification ($K$ in the DSFL paper) as half of the number of parameters of the classifier head.

\paragraph{FedBE.} We use Adam with a learning rate of $1e^{-4}$ for training the classifier head and we sample 15 models from the posterior distribution of classifier heads.

\begin{table*}[h]
  \caption{Summary of datasets}
  \label{tbl:datasests_full}
    \centering
    {\begin{tabular}{lccccc}
    \toprule
    Dataset   & Image size      & \# Train        & \# Testing      & \#Classes & Feature extractor  \\
    \midrule
    CIFAR10   &  (32, 32)       & 50,000          & 10,000          & 10        & ResNet-50                     \\  
    CIFAR100   &  (32, 32)       & 50,000         & 10,000          & 100        & ResNet-50                     \\  
    PACS (P) &  (224, 224)       & 1,336         & 334          & 7        & Base ViT, 16                   \\  
    PACS (S) &  (224, 224)       & 3144         & 785          & 7        & Base ViT, 16                     \\
    Office Home (C) &  min:(18, 18)       & 3492         & 873          & 65        & Base ViT, 16                    \\
    Office Home (P) &  min:(75, 63)       & 3551         & 888          & 65        & Base ViT, 16                    \\
    Caltech101 &  min:(200, 200)       & 6084         & 3060          & 101        & CLIP, ViT-B/32                   \\
    Stanford Cars &  min:(360, 240)       & 12948         & 3237          & 196        & CLIP, ViT-B/32                    \\
    Oxford Pets &  min:(108, 114)       & 3680          & 3669           & 37        & CLIP, ViT-B/32                    \\
    Food101 &  max:(512, 512)       & 75750         & 25250          & 101        & CLIP, ViT-B/32                    \\
    \bottomrule
  \end{tabular}}
\end{table*}
\clearpage
\subsection{Comparing to existing one-shot and multi-shot FL
methods}
In this section, we provide the details for the experiment in Section \ref{sec:comp_sota}. Table \ref{tbl:FL-50} summarizes one-shot methods results in Table format for Figure \ref{fig:com_acc}. Further, Figure \ref{fig:data_partition_caltech} and \ref{fig:data_partition_cifar100} shows the data partitions based on Dirichlet shift with $\beta=0.1$ with 50 clients for Caltech101 and CIFAR100 Datasets, respectively. The magnitude of data samples for each class label in each client is represented by the size of the red circle. This figure shows the non-iidness of data distribution among clients.

\begin{table*}[h]
\centering
  \caption{one-shot methods results in Table format for Figure \ref{fig:com_acc}}
  \label{tbl:FL-50}
    \begin{tabular}{lcccc}
    \toprule
    & \multicolumn{2}{c}{CIFAR100 ($\beta=0.1$)}  & \multicolumn{2}{c}{Caltech101 ($\beta=0.1$)}  \\
    Methods   & Accuracy      & Comm.       & Accuracy      & Comm.  \\
    \midrule
    \tr Centralized  & \tr 73.50 $\pm$ \tiny{0.04}              & \tr 97 MB            & \tr 95.36 $\pm$ \tiny{0.08}               & 3.6 MB         \\
    Ensemble       & 53.97 $\pm$ \tiny{0.49}              & 19 MB            & 72.86 $\pm$ \tiny{2.09}               & 4.9 MB         \\     
    AVG          & 43.51 $\pm$ \tiny{0.26}              & 19 MB            & 56.90 $\pm$ \tiny{0.67}             & 4.9 MB         \\     
    \midrule   
    FedKT          & 51.80 $\pm$ \tiny{0.02}              & 19 MB            & 46.64 $\pm$ \tiny{0.33}             & 4.9 MB         \\
    FedBE          & 56.32 $\pm$ \tiny{0.44}              & 19 MB            & 57.67 $\pm$ \tiny{0.27}             & 4.9 MB         \\
    \midrule
    $\hat{\mathcal{G}}_{\text{spher}}$(k=1)      & 56.96 $\pm$ \tiny{0.23}             & 22 MB          & 86.41 $\pm$ \tiny{0.13}              & 3 MB       \\    
    \midrule 
    $\mathcal{G}_{\text{diag}}$(k=10)      & 69.91 $\pm$ \tiny{0.11}             & 0.3 GB          & 94.67 $\pm$ \tiny{0.03}              & 23 MB       \\     
    $\mathcal{G}_{\text{diag}}$(k=50)      & 72.05 $\pm$ \tiny{0.10}             & 0.9 GB          & 94.74 $\pm$ \tiny{0.13}              & 33 MB       \\     
    $\mathcal{G}_{\text{diag}}$(k=100)      & \bf{72.29} $\pm$ \tiny{0.08}             & 1.2 GB          & \bf{94.83} $\pm$ \tiny{0.18}              & 36 MB       \\      
    \midrule 
    $\mathcal{G}_{\text{spher}}$(k=1)      & 60.16 $\pm$ \tiny{0.04}             & 22 MB          & 93.46 $\pm$ \tiny{0.16}              & 3 MB       \\     
    $\mathcal{G}_{\text{spher}}$(k=10)      & 69.74 $\pm$ \tiny{0.16}             & 0.2 GB          & 94.59 $\pm$ \tiny{0.13}              & 12 MB       \\     
    $\mathcal{G}_{\text{spher}}$(k=50)      & \bf{71.88} $\pm$ \tiny{0.03}             & 0.5 GB          & \bf{94.71} $\pm$ \tiny{0.16}              & 16 MB       \\    
    
    \bottomrule
  \end{tabular}
\end{table*}
\clearpage
\begin{figure}[t!]
\centering 
\includegraphics[width=0.99\columnwidth]{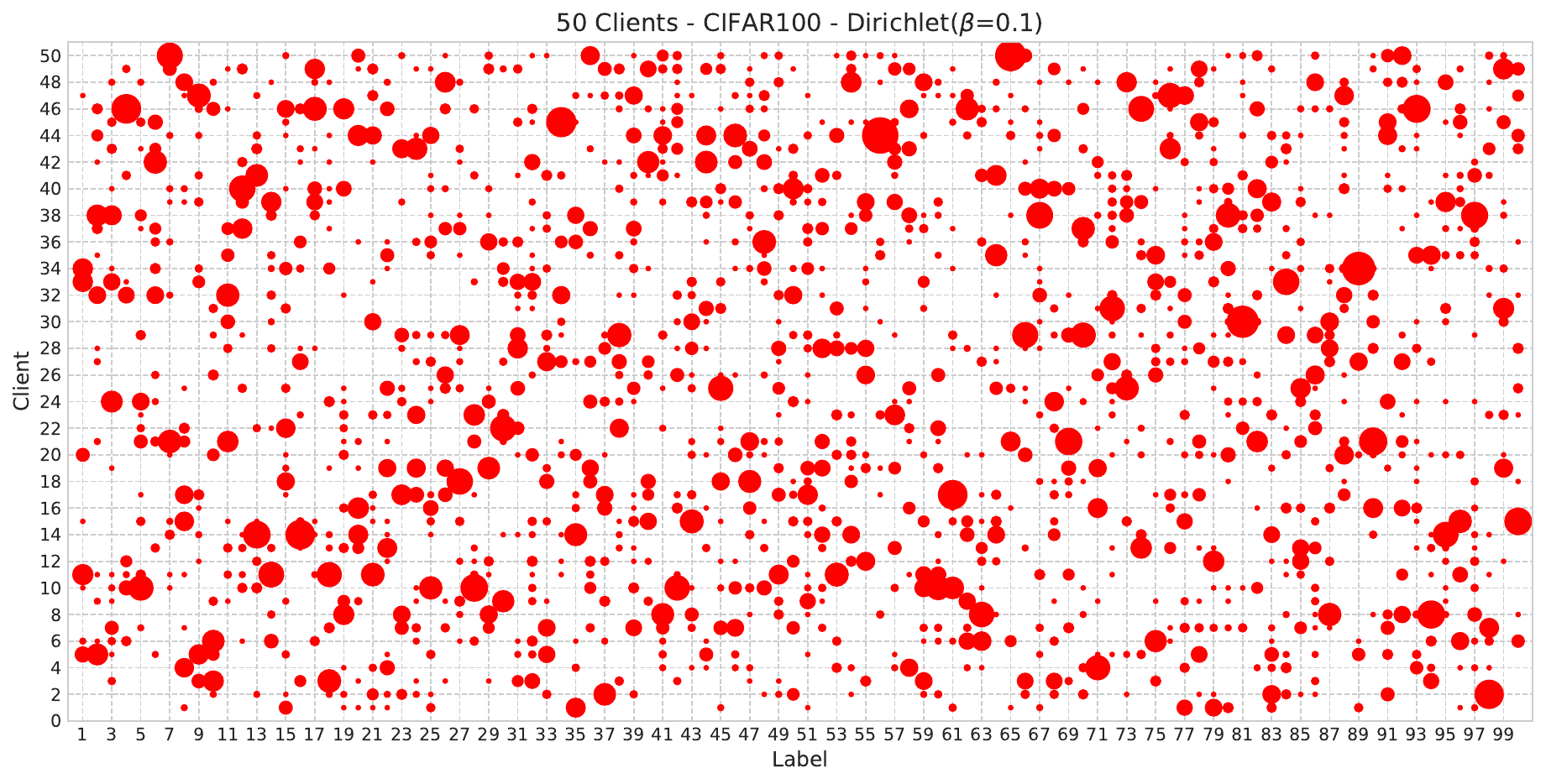} %
\caption{Data partitions based on Dirichlet shift with $\beta=0.1$ with 50 clients for CIFAR100 datasets. The size of the red circle represents the magnitude of data samples for each class label in each client}
\label{fig:data_partition_cifar100}
\end{figure}

\begin{figure}[t!]
\centering 
\includegraphics[width=0.99\columnwidth]{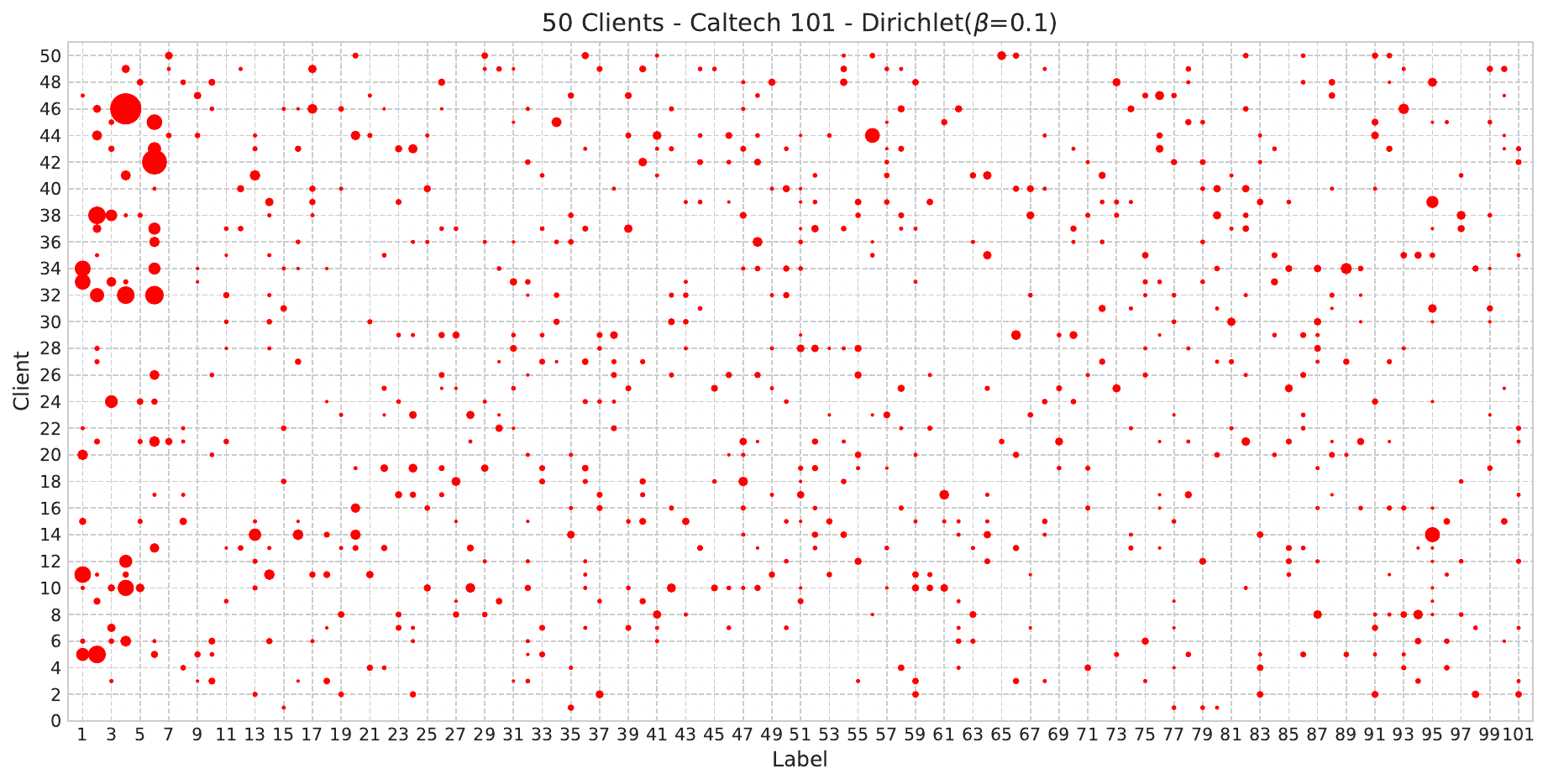} 

\caption{Data partitions based on Dirichlet shift with $\beta=0.1$ with 50 clients for Caltech101 datasets. The magnitude of data samples for each class label in each client is represented by the size of the red circle}
\label{fig:data_partition_caltech}
\end{figure}

\clearpage
\section{Reconstruction attack details}\label{sec:recon-details}

We conduct reconstruction attacks on the various feature-sharing schemes described in this paper. First, we verify the vulnerability of raw feature sharing: we demonstrate that an attacker with access to in-distribution data can obtain high-fidelity reconstructions of private training data. This necessitates sharing schemes beyond sending raw features. The attack we present proceeds by training a generative model on (feature, image) pairs, and then performing inference on received feature embeddings. Next, we apply the aforementioned attack on features sampled from GMMs received via our proposed scheme. We find that the resulting reconstructions do not resemble real training data.


\subsection{Threat model}
We consider the setting where two clients (a \emph{defender} and an \emph{attacker}) collaborate via a feature-sharing scheme to train a model to perform well on the union of their datasets. \emph{Both parties} have:
\begin{itemize}
    \item Local datasets of (image, label) pairs, which we call $D_d$ and $D_a$ respectively, where $D_* = \{(x_i,y_i)\}_{i=1}^{m_*}$.
    \item Black-box access to a feature embedding function $E : \mathcal X \to \mathcal Z$.
\end{itemize}

\emph{The defender} produces an embedded version of their dataset $E(D_d):= \{(E(x_i), y_i)\}_{i=1}^{m_d}$, and passes it along to the attacker via a feature-sharing scheme. We consider sending: the embedded dataset $E(D_d)$ directly (\emph{raw features}), our proposedfeature-sharing scheme (\emph{FedPFT}), as well as with differentially privacy (\emph{DP-FedPFT}).

\emph{The attacker} has a dataset $D_a$, which is assumed to be in-distribution of the defender's dataset $D_d$. Concretely: $D_a$ is the CIFAR-10 train set ($50$K examples) and $D_d$ is the CIFAR-10 test set ($10$K examples) in our experiments.

\subsection{Attacker objectives.} We identify 3 attacker objectives representing varying levels of privacy violation, ordered by strictly decreasing attack strength.
\begin{enumerate}
    \item \emph{(Total reconstruction).} The attacker is able to accurately reconstruct every example in $E(D_d)$.
    \item \emph{(Partial reconstruction).} The attacker is able to identify a small subset of $E(D_d)$, on which it can accurately reconstruct.
    \item \emph{(Set-level reconstruction).} The attacker is able to produce a set of reconstructions and a small subset of them correspond to real training points.
\end{enumerate}

Although a weaker attack than (1), (2) still constitutes a strong privacy violation, since privacy is a \emph{worst-case} notion: a priori, a user submitting their data does not know whether they are part of the reconstructable set or not (see \citet{lira} for discussion). Success in (3) would imply an attacker can generate accurate reconstructions, but are unable to identify which candidates correspond to real training data. Note that success in (3) combined with a good membership inference attack implies success in (2).


\subsection{Experimental details} 
We train U-Net-based conditional denoising diffusion models on the CIFAR-10 train set. During training, extracted features are added to time-step embeddings and condition the model via shift-and-scale operations inside normalization layers. We sample with DDIM and classifier-free guidance.

We find that there is a non-trivial overlap between the CIFAR-10 test and train sets; on which our models memorize and reconstruct perfectly. To account for this, we filter the CIFAR-10 test set for near-duplicates: we remove the 1K test images with the highest SSIM score with a member of the train set, leaving 9K images to evaluate on. 
We also manually inspect reconstructions and verify that they differ from the closest training image to the target.

\subsection{Results}
\paragraph{Raw feature reconstruction.} We present the results of our reconstruction attack on raw features in Figure \ref{fig:raw-feature-recon} for 3 selection methods (\emph{all}, \emph{attacker}, and \emph{oracle}). Each selection method is representative of performance in the corresponding attacker objectives of \emph{total}, \emph{partial}, and \emph{set-level} reconstruction. Quantitative image similarity measures are reported in Table \ref{tab:raw-feature-recon}. 

For \emph{attacker-selection}, we sample 10 reconstructions from each embedding, compute the average pairwise SSIM amongst our reconstruction, and select the top $1\%$ of reconstructions according to this metric. This is based on the intuition that stronger determinism during sampling implies the model is more confident about what the information in the embedding corresponds to.

\begin{table}[!h]
    \centering
    \begin{tabular}{clrrr}
    \toprule
    \multirow{2}{*}{\textbf{Method}} & \multirow{2}{*}{\textbf{Selection}} & \multicolumn{3}{c}{\textbf{Similarity measure}} \\
    \cmidrule{3-5}
     &  & \multirow{1}{*}{\emph{PSNR $\uparrow$}} & \multicolumn{1}{c}{\emph{LPIPS $\downarrow$}} & \multicolumn{1}{c}{\emph{SSIM $\uparrow$}} \\
    \midrule
    \multirow{4}{*}{\shortstack{ResNet-50 \\ reconstruction}} & All & 13.5 & .384 & .168 \\
                                                           \cmidrule(lr){2-5}
                                                         & Attacker & 14.7 & .297 & .358  \\
                                                           \cmidrule(lr){2-5}
                                                         & Oracle & 16.5 & .257 & .535  \\
    \midrule
    \multirow{4}{*}{\shortstack{MAE \\ reconstruction}}  & All & 15.6 & .305 & .276  \\
                                                      \cmidrule(lr){2-5}
                                                    & Attacker & 17.9 & .191 & .579  \\
                                                      \cmidrule(lr){2-5}
                                                    & Oracle & 19.5 & .181 & .674  \\
    \midrule
    \multirow{1}{*}{Train set} & Oracle & 14.7 & .389 & .500  \\
    \bottomrule
    \end{tabular}
    \caption{Similarity metrics between original image and \textbf{raw feature reconstructions} on the filtered CIFAR-10 test set. Figures are computed on different selections of the test set, representing different attacker objectives. \textbf{All:} averaged result over entire test set; performance corresponds to \emph{total reconstruction} objective. \textbf{Attacker:} attacker selects 1$\%$ ($n=90$) reconstructions without access to ground truth; corresponds to \emph{partial reconstruction} objective. \textbf{Oracle:} top 1$\%$ ($n=90$) reconstructions selected based on ground-truth SSIM; corresponds to \emph{set-level reconstruction}.}\label{tab:raw-feature-recon}
\end{table}

\begin{figure}[!h]
    \centering 
    \includegraphics[width=0.8\columnwidth, trim={0cm 8.2cm 11.2cm 3.1cm}, clip]{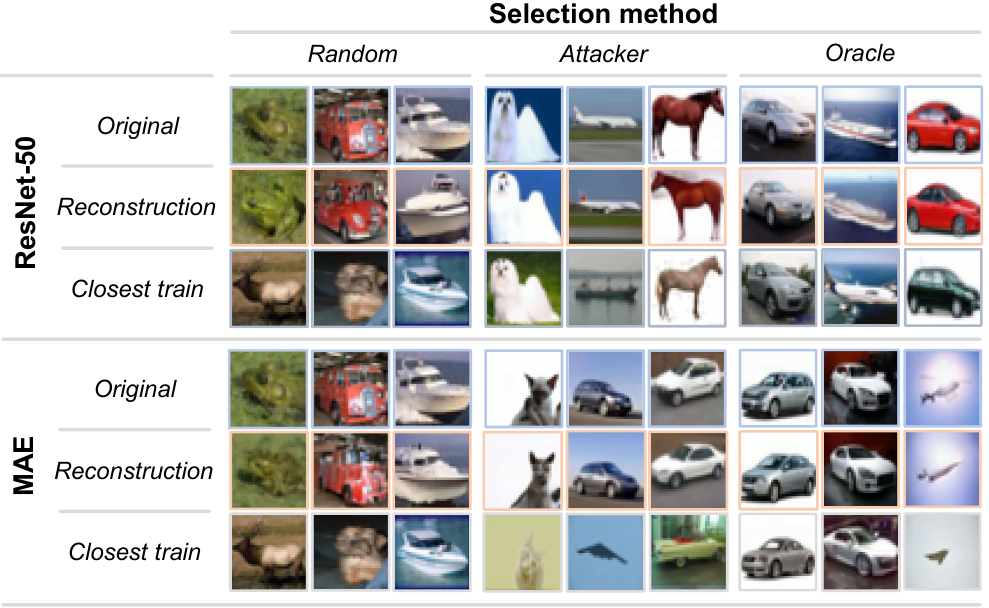} 
    \caption{Results comparing the original image and \textbf{raw feature reconstructions}, along with the original images' closest train set member by SSIM. We present results for 2 backbones: ResNet-50 and Masked Autoencoder; and 3 selection methods (\emph{random}, \emph{attacker}, and \emph{oracle}), which correspond to performance on \emph{total}, \emph{partial}, and \emph{set-level} reconstruction.}
    \label{fig:raw-feature-recon}
\end{figure}

Our main result is that an attacker is capable of producing reasonable reconstructions in all 3 settings, outperforming a baseline of selecting the closest training image. Furthermore, we find that: (1) The feature backbone affects reconstruction quality (MAE reconstructions are better than ResNet-50); (2) The attacker can effectively employs heuristics (intra-sample similarity) to identify which reconstructions are likely to be good, approaching the results of oracle selection based on the ground-truth image.

\paragraph{FedPFT reconstruction.} Table \ref{tab:gmm-recon} and Figure \ref{fig:gmm-recon} show results for set-level reconstruction of GMM-sampled features, for both random and worst-case test images. We see that even when with a ground truth similarity oracle, most images fail to be reconstructed (\emph{Oracle-Random} column of Figure \ref{fig:gmm-recon}). DP further diminishes attack effectiveness, in particular for worst-case set-level reconstructions.

\begin{table}[!h]
    \centering
    \begin{tabular}{cclrrr}
    \toprule
    \multirow{2}{*}{\textbf{Backbone}} & \multirow{2}{*}{$\epsilon$} & \multirow{2}{*}{\textbf{Selection}} & \multicolumn{3}{c}{\textbf{Similarity measure}} \\
    \cmidrule{4-6}
     &  & & \multirow{1}{*}{\emph{PSNR $\uparrow$}} & \multicolumn{1}{c}{\emph{LPIPS $\downarrow$}} & \multicolumn{1}{c}{\emph{SSIM $\uparrow$}} \\
    \midrule
    \multirow{5}{*}{ResNet-50} & \multirow{2}{*}{$\infty$} & Oracle & 14.7 & .423 &  .508 \\
                                                           \cmidrule(lr){3-6}
                                                         & & Oracle-all & 12.7 & .546 & .326  \\
                                                           \cmidrule(lr){2-6}
                            & \multirow{2}{*}{$10$} & Oracle & 14.6 & .511 & .483 \\
                                                           \cmidrule(lr){3-6}
                                                         & & Oracle-all & 12.7 & .571 & .322  \\
    \midrule
    \multirow{5}{*}{MAE} & \multirow{2}{*}{$\infty$} & Oracle & 14.5 & .456 & .497 \\
                                                       \cmidrule(lr){3-6}
                                                     & & Oracle-all & 12.5 & .558 & .323 \\
                                                       \cmidrule(lr){2-6}
                        & \multirow{2}{*}{$10$} &  Oracle & 13.4 & .553 & .452 \\
                                                       \cmidrule(lr){3-6}
                                                     & & Oracle-all & 12.2 & .595 & .306  \\

    \bottomrule
    \end{tabular}
    \caption{ Similarity metrics between original images and \textbf{set-level reconstructions} on the filtered CIFAR-10 test set. For each test image, we match it to its closest in terms of SSIM among all reconstructions. \textbf{Oracle:} the top $1\%$ $(n=90)$ of matched reconstructions by ground-truth SSIM, corresponding to performance on \emph{set-level reconstruction}. \textbf{Oracle-all:} average similarity between test images and their matched pairs.
    }\label{tab:gmm-recon}
\end{table}

\begin{figure}[!h]
    \centering 
    \includegraphics[width=0.7\columnwidth, trim={0cm 19.7cm 30.7cm 0cm}, clip]{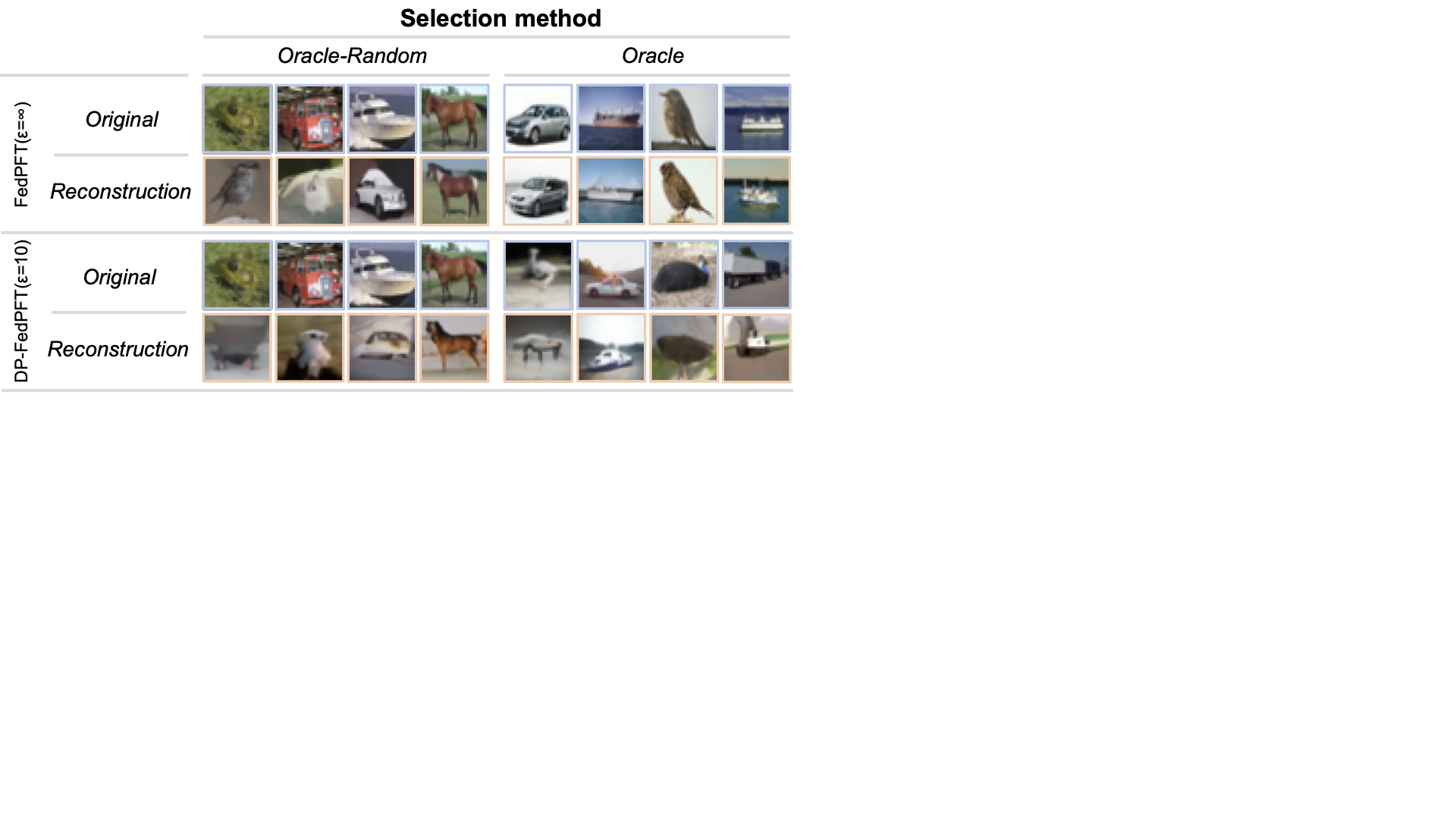}
    \caption{ Results comparing the original image and \textbf{set-level reconstructions} with ResNet-50 backbone. We present results for two selection methods. \textbf{Oracle-Random:} the closest image in the reconstruction set by SSIM for random test images. \textbf{Oracle:} the closest image in the reconstruction set by SSIM for worst-case test images.}
    \label{fig:gmm-recon}
\end{figure}


\end{document}